\newcommand*{\addFileDependency}[1]{
\typeout{(#1)}
\@addtofilelist{#1}
\IfFileExists{#1}{}{\typeout{No file #1.}}
}
\newcommand*{\myexternaldocument}[1]{%
\externaldocument{#1}%
\addFileDependency{#1.tex}%
\addFileDependency{#1.aux}%
}
\begin{document}
\twocolumn[

\aistatstitle{Optimal Budgeted Rejection Sampling for Generative Models}

\aistatsauthor{Alexandre Verine \And Muni Sreenivas Pydi   \And  Benjamin Negrevergne \And Yann Chevaleyre }

\aistatsaddress{ LAMSADE, CNRS,\\
Université Paris-Dauphine,\\
Université PSL,\\
Paris, France. \And LAMSADE, CNRS,\\
Université Paris-Dauphine,\\
Université PSL,\\
Paris, France. \And LAMSADE, CNRS,\\
Université Paris-Dauphine,\\
Université PSL,\\
Paris, France. \And LAMSADE, CNRS,\\
Université Paris-Dauphine,\\
Université PSL,\\
Paris, France. } ]

\begin{abstract}

Rejection sampling methods have recently been proposed to improve the performance of discriminator-based generative models. 
However, these methods are only optimal under an unlimited sampling budget, and are usually applied to a generator trained independently of the rejection procedure. 
We first propose an Optimal Budgeted Rejection Sampling (OBRS) scheme that is provably optimal with respect to \textit{any} $f$-divergence between the true distribution and the post-rejection distribution, for a given sampling budget. Second, 
we propose an end-to-end method that incorporates the sampling scheme into the training procedure to further enhance the model's overall performance. 
Through experiments and supporting theory, we show that the proposed methods are effective in significantly improving the quality and diversity of the samples.  

\end{abstract}

\section{INTRODUCTION}
\begin{figure*}[t]
    \centering
    \includegraphics[width=\textwidth]{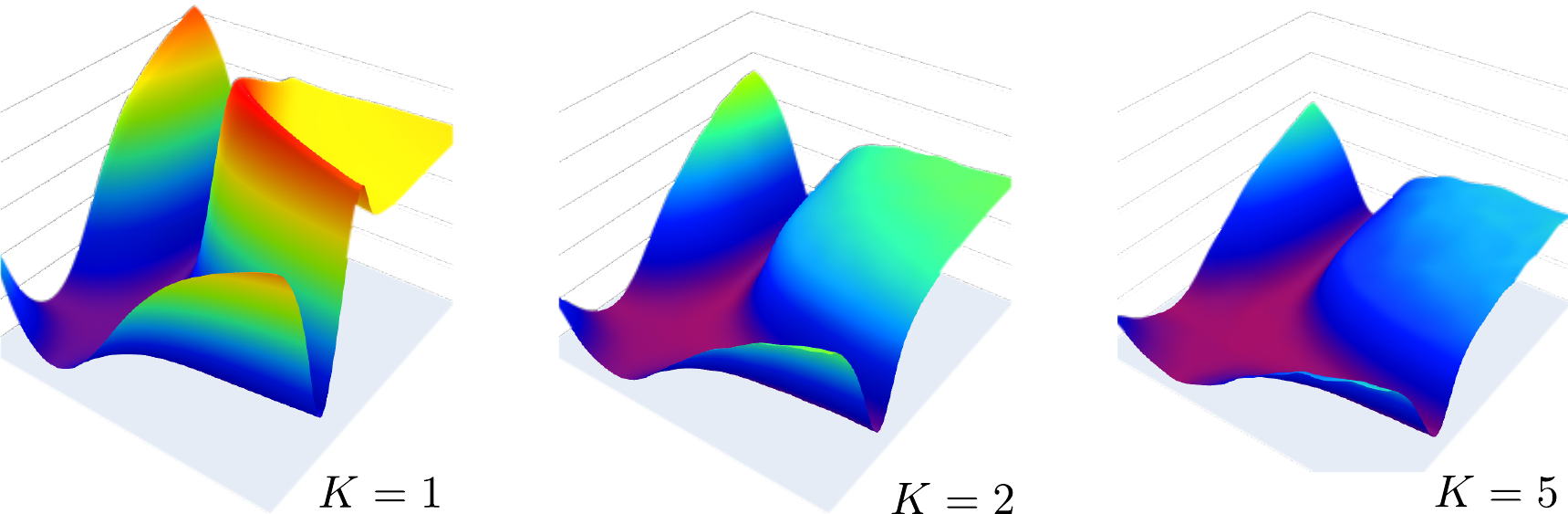}
    \caption{The loss landscape in the parameter domain of a GAN trained on MNIST. The x-axis and y-axis are random directions in the parameter space. The loss is between the target distribution $P$ and the post-rejection distribution. There are three cases: no rejection ($K=1$), $50\%$  acceptance rate ($K=2$) and $20\%$  acceptance rate ($K=5$). OBRS not only reduces loss, but also flattens out the loss landscape and helps avoid  local minima.}\label{fig:MNISTSmooth}
\end{figure*}
Generative Adversarial Networks (GANs)  have significantly improved generation of complex, high dimensional data. In the original paper by \citet{goodfellow_generative_2014}, GANs are trained to  minimize the Jensen-Shannon divergence between true distribution $P$ and a distribution $\whP$ induced by a generator $G$.
Since $P$ is generally unknown, the divergence between $P$ and $\whP$ is estimated using a discriminator $T$, i.e. a function that discriminates available samples from $P$ and samples generated from $\whP$. In practice $T$ and $G$ are  represented using neural networks and trained simultaneously to estimate the divergence and to minimize it. 
In this paper, we consider the more general framework of $f$-GAN introduced by \citet{nowozin_f-gan_2016}, which  can be used to minimize {\em any} \fdiv between $P$ and  $\whP$, including the Jensen-Shannon divergence, the Kullback-Leibler divergence or other divergences (See Table~\ref{tab:fdiv}).

In most settings, the discriminator is not involved in the generation of new samples beyond the training phase (i.e. it is discarded after training). Building on this observation, several methods  such as  Discriminator Rejection Sampling (DRS) \citep{azadi_discriminator_2019} or Metropolis-Hastings GAN \citep{turner_metropolis-hastings_2019} have demonstrated how to combine $G$ \emph{and} $T$ using rejection sampling, in order to generate better samples than the ones generated using $G$ alone. 
In the rest of this paper, we call $\wtP$ the distribution resulting from $G$ enhanced with rejection sampling.

Unfortunately, these methods suffer from several limitations. First they are only provably optimal when the sampling budget is unlimited. In practice, users have to limit the rejection rate to obtain samples in reasonable time through various empirical means (e.g. by capping the number of iterations of the sampling algorithm). This strategy may not  yield the best possible sample for the given budget, an observation that leads to the first question that motivated our contribtion. 

\begin{question}\label{question:q1}
    How to devise a method that generates the best quality sample under a fixed rejection budget?  
\end{question}

Another important limitation is that, since examples are sampled from $\wtP$ rather than $\whP$, the objective should be to minimize the divergence between $P$ and $\wtP$ rather than the divergence between $P$ and $\whP$. This raises a  second research question that we address in this paper:


\begin{question}\label{question:q2}
    Can we train a generator $G$ that directly minimizes $\Df(P\Vert \wtP)$ instead of $\Df(P\Vert \whP)$ ?
\end{question}

In this paper, we address Question~\ref{question:q1}\&\ref{question:q2}, by making following contributions:
\begin{itemize}
    \item    
    We introduce ORBS, a method that can be used to find an  {\em acceptance function} required to reject/accept samples from $\whP$ and show in Theorem~\ref{thm:optrej} that this function induces the optimal distribution $\wtP$ under a budget, for \emph{any} \fdiv. 
    \item We characterize the improvement of $\wtP$ over $\whP$ in terms of Precision and Recall~\citep{sajjadi_assessing_2018} in Theorem~\ref{thm:improvalpha}.
    \item We propose a method to train a generator $G$ to directly minimize an \fdiv between $P$ and $\wtP$, and we discuss the potential benefits of our method. For example, in Figure~\ref{fig:MNISTSmooth}, we illustrate how OBRS can flatten the loss landscape.    
\end{itemize}

\textbf{Notation: } For the rest of the paper, we use $\cX\subseteq \mathbb R^d$ to refer to the data space. We use $\cP(\cX)$ to denote the set of probability measures on $\cX$ defined on a measure space with the Borel $\sigma$-algebra. We use capital letters to denote probability measures (for e.g., $P\in \cP(\cX)$) and small letters to denote their densities (for e.g., $p(\vx)$ for $\vx\in \cX$).

\section{BACKGROUND}
\subsection{\fdivs}

The framework of \fdivs can be used to specify a variety of divergences between two probability distributions. 
An \fdiv is fully characterized by a convex and lower semi-continuous function $f:\reals^+ \to \reals$ that satisfies  $f(1)=0$. Given $f$ and two probability distributions  $P$ and $\whP\in \cP(\cX)$, the \fdiv  between $P$, $\whP$ (denoted $\Df (P\Vert\whP )$) is defined as follows:
\begin{align}
\label{eq:fdiv}\Df (P\Vert\whP ) = \E_{\vx\sim \wh P}\left[  f\left(\frac{p(\vx)}{\whp(\vx)}\right)\right].
\end{align}

(We assume that $P$ is absolutely continuous with w.r.t. $\whP$.) Several notable statistical divergences, such as the Kullback-Leibler (KL) divergence ($\KL$), the reverse KL divergence ($\rKL$), or the Total Variation ($\TV$), belong to the class of \fdivs. 
A  overview is provided in Table~\ref{tab:fdiv}.

A key property of \fdivs is that every \fdiv $\Df$ admits a dual variational form \citep{nguyen_surrogate_2009}:
\begin{equation}
\begin{aligned}
\label{eq:dual}
\Df(P \Vert \wh P)
=\sup_{\T \in \cal T} \E_{\vx\sim P}\left[T(\vx) \right] - \E_{\vx\sim \wh P}\left[ f^*(T(\vx))\right],
\end{aligned}
\end{equation}
where $\cT$ be the set of all measurable functions $T: \cX\to \mathbb{R}$ and $f\s(t) \coloneqq \sup_{u \in \reals} \left\{tu-f(u)\right\}$ is the convex conjugate of $f$. Specifically, the function $\Topt\in \cal T$ that yields the supremum in \eqref{eq:dual} can be used to determine the likelihood ratio $\ropt$ as follows.
\begin{equation}\label{eq:densityestimation}
\ropt(\vx) =  \nabla f\s \left(\Topt(\vx)\right) = \frac{p(\vx)}{\wh p (\vx)}.
\end{equation}

\begin{table*}[t]
\caption{List of common \fdivs. The generator $f$ is given with its Fenchel conjugate $f^*$. The optimal discriminator $\Topt$ is given to compute the likelihood ratio $p(\vx)/\whp(\vx) = \nabla f^*(\Topt(\vx))$. }
\label{tab:fdiv}
\begin{sc}
\begin{center}

\begin{tabular*}{\textwidth}{l @{\extracolsep{\fill}} ccccc}
\toprule
Divergence & Notation & $f(u)$ & $f\s(t)$ & $\Topt(\vx)$ \\
\midrule \addlinespace[0.5em]
KL & $\KL $ & $u\log u$& $\exp(t-1)$ & $1 + \log p({\vx})/\whp(\vx)$  \\ \addlinespace[0.4em]
GAN & $\GAN $ & $u\log u - (u+1)\log(u+1)$& $-\log(1-\exp (t) ) $ & $p(\vx)/\left(p(\vx)+\whp(\vx)\right)$  \\ \addlinespace[0.4em]
PR & $\DPR$ & $\max(\lambda u , 1) - \max(\lambda, 1)$ & $t/\lambda$ &  $\lambda \sign\left\{ p(\vx)/\whp(\vx)-1\right\}$   \\ \addlinespace[0.4em]
\bottomrule
\end{tabular*}
\end{center}
\end{sc}
\end{table*}

\subsection{$f$-GAN, a generalization of GAN}\label{subsec:fgan}
Let $\cal G$ be the set of all measurable functions $G: \cal Z \to\cal X$, where $\cal Z$ is the latent space and $\cX$ is the data space. 
In the $f$-GAN framework, 
the generator $G\in \cal G$ is used to transform samples from the latent distribution $Q\in \cP(\cal Z)$ (typically a multivariate Gaussian) into data samples following the data distribution $\whP_G\in \cP(\cal X)$ . $G$ is chosen to minimize the $f$-divergence $\Df(P \Vert \wh P_G)$ 

Since $P$ is usually not available, a discriminator $T: \setX \to \mathbb R$ is used to estimate $\Df(P \Vert \wh P_G)$ through the dual variational form in \eqref{eq:dual}, resulting in the following minimax objective \citep{nowozin_f-gan_2016}.
\begin{align}\label{eq:dualtraining}
\min_{G\in \cal G}\max_{\T\in \cal T} \E_{\vx\sim P}\left[T(\vx) \right] - \E_{\vx\sim \wh P_G}\left[ f\s(T(\vx))\right].
\end{align}
The optimization procedure is detailed in Algorithm~\ref{alg:naiveapproach}. 
An important special case is that of the original paper of \cite{goodfellow_generative_2014}, where $D(\vx)\coloneqq\exp\left(T(\vx)\right)$, and the minimax objective is as follows:
\begin{align}
\min_{G}\max_{D} \E_{\vx\sim P}\left[\log\left(D(\vx)\right) \right] + \E_{\vx\sim \wh P_G}\left[ \log\left(1-D(\vx)\right)\right].
\end{align}

\subsection{Rejection Sampling}\label{subsec: rejection sampling}
\label{subsec:rejsam}
Rejection Sampling is a classical method to generate samples from a distribution using samples drawn from a different distribution. In the context of this paper,  samples drawn from $\whP$ are accepted or rejected using an {\em acceptance function} $a: \cX\to [0,1]$, where $a(\vx)$ is the probability of accepting a sample $\vx$ from $\whP$. The distribution induced by the rejection procedure based on $a$ is a new distribution in $\cP(\cX)$ denoted $\widetilde{P}_a$. 
The density $\widetilde p_a(\vx)$ of $\wtP_a$ has the following form:
\begin{align}\label{eq:refineddensity}
    \widetilde p_a(\vx) = \frac{\wh p (\vx)a(\vx)}{Z},
\end{align} 
where $Z>0$ is a normalizing constant that ensures that $\int_{\Xset}\widetilde p_a(\vx)=1$. The overall acceptance rate is $\EE{\whP}{a(\vx)} =  Z$. Note that $Z\leq 1$.
If $p, \wh p$ are known, and if there are no constraints on the sampling budget (i.e., no lower limit on $Z$), then $a$ can be set to $a(\vx) = \frac{p (\vx)}{\wh p (\vx)M}$ with $M = \sup_{x\in \cX}\frac{p(\vx)}{\wh p(\vx)}$ so that $\widetilde{P}_a$ matches perfectly the target distribution $P$ because $\widetilde p_a(\vx) = \whp(\vx) \frac{p (\vx)}{\wh p (\vx)ZM} = p(\vx)$ and we have $Z=1/M$. 
However in practice for high-dimensional $\cX$, $M$ can take high values and set a very low acceptance rate \citep{mackay_information_2005}.


%
\paragraph{Rejection Sampling for GANs:}
\cite{azadi_discriminator_2019} propose \emph{Discriminator Rejection Sampling (DRS)} scheme wherein a trained discriminator $\T$ is used to approximate the likelihood ratio via the formula,
\begin{align}\label{eq:lkhratio}
    r(\vx) = \nabla f\s\left(\T(\vx)\right),
\end{align}
which is an approximation of \eqref{eq:densityestimation}.
Thus, the acceptance function of DRS is given by $a_{\mathrm{DRS}}(x)=\frac{r\left(\vx\right)}{M}$,
where $M=\sup_{\vx}  \left\{r(\vx)\right\}$ is estimated using samples $\vx\sim\whP$. To account for low acceptance rate, DRS uses a hyper parameter $\gamma$ to adjust the acceptance rate as,
\begin{align}
    a_{\mathrm{DRS}}\left(\vx\right)=\frac{r(\vx)}{M}e^{-\gamma}.
\end{align}
In practice, the discriminator $T$ is calibrated such that $\EE{\whP}{r(\vx)} =  1$ which results in an overall acceptance rate of $\EE{\whP}{a(\vx)} =  \frac{e^{-\gamma}}{M}$. A low value of $\gamma$ (typically $\gamma<0$) boosts the acceptance rate. 

\paragraph{Related sampling methods:} The introduction of DRS has lead to the development of numerous sampling methods that are also applicable to GANs, such as MH-GAN \citep{turner_metropolis-hastings_2019}, DDLS \citep{che_your_2021}, DOT \citep{tanaka_discriminator_2019}, and DG$f$low \citep{ansari_refining_2021}, LatentRS \citep{issenhuth_latent_2022} and even for Normalizing Flows \citep{stimper_resampling_2022}. These methods, relying on gradient ascent or the training of a latent model, have showcased their potential through various applications. However, the sampling is computationally expensive and are not as efficient under a limited time constraint.    
\paragraph{Accounting for rejection during training:} While the majority of methods employ the rejection sampling scheme post-training, incorporating an \textit{a priori} perspective on the sampling procedure also yields good results empirically. For example, \citet{grover_variational_2018} and  \citet{stimper_resampling_2022} have embedded latent rejection sampling within their training processes, applying it within a variational inference context and a Normalizing Flow framework, respectively. 

\section{OPTIMAL BUDGETED REJECTION SAMPLING (OBRS)}
\label{sec:optirej}
Rejection sampling exhibits a well-established efficiency on low-dimensional samples; but the acceptance rate drops when it is applied to higher dimensional samples \citep{mackay_information_2005}. In this section, we study the problem of rejection sampling under a limited sampling budget $K \in [1, \infty)$, where $K$ represents the expected  number of samples drawn from $\wh P_G$ required to generate a sample from $\wtP_a$. 
We start by introducing a method to find the optimal acceptance function under a given budget $K$ (thus addressing research Question~\ref{question:q1}), then  we characterize the improvement provided by this new method using  Precision and Recall for generative models \citep{sajjadi_assessing_2018}. 

\subsection{Optimal acceptance function}

We recall that $P$ is the true data distribution, $\whP_G$ (or $\whP$ for short) is the distribution induced by the generator, and $\wtP_a$ is the distribution obtained by applying the acceptance function $a$ on samples from $\whP$. Given a fixed $\whP$, our goal is to find the acceptance function $a$ that minimizes the  divergence between $P$ and $\wtP_a$ under a budget $K$, as follows: 
\begin{align}
\begin{split}\label{eq:divproblem}
        \min_a &\quad\Df(P\Vert \wtP_a) \\
        \mbox{s.t. } & \begin{cases}\EE{\whP}{a(\vx)} \geq 1/K, \\
        \forall \vx \in\Xset, \, 0\leq a(\vx)\leq 1. \end{cases}
\end{split}
\end{align}
Here the constraint $\EE{\whP}{a(\vx)} \geq 1/K$ is used to bound the expected acceptance rate. 
For $K=1$, the only $a$ satisfying the constraints in \eqref{eq:divproblem} is the unit function $a(\vx)=1 \ \forall\ \vx\in \cX$. This case corresponds to no rejection (or accept w.p. $1$) and we have $\wtP_a = \whP$ almost everywhere.

Note that the objective $\Df(P\Vert \wtP_a)$ is continuous with respect to $a$. Since the constraint set for $a$ is closed and bounded, there exists an optimal $a$ for problem~\eqref{eq:divproblem}. In the following theorem, we give an explicit form for the optimal solution $\aobs$ for finite $\cX$ using Lagrangian~duality.


\begin{theorem}[Optimal Acceptance Function]\label{thm:optrej}
For a sampling budget $K\geq 1$ and finite $\cX$, the solution to problem (\ref{eq:divproblem}) is,
\begin{align}
		\aobs(\vx)  = \min\left(\frac{p(\vx)}{\wh p(\vx)}\frac{c_K}{M}, 1\right),
	\end{align}
where $c_K\ge  1$ is such that $\E_{\vx\sim \wh p}[\aobs(\vx)] =1/K$.
\footnote{This acceptance function was previously introduced by \cite{grover_variational_2018}), with the sole argument that it is a "natural" approximation of the optimal acceptance function. No theoretical argument was provided.} 
\end{theorem}
Few observations should be made on Theorem~\ref{thm:optrej}:
\begin{itemize}
    \item The constant $c_K$ is solely determined by $K$. In practice, we can compute it using a dichotomy algorithm (detailed in Appendix~\ref{app:subsec:algoc}).
    \item A budget greater than $M = \sup_{x\in \cX}\frac{p(\vx)}{\wh p(\vx)}$ (unbudgeted sampling) implies that $c_K=1$, and thus $\aobs(\vx)  = \frac{p(\vx)}{M\wh p(\vx)}$. 
    \item The optimal function $\aobs$ does not depend on $f$ meaning that OBRS is optimal for various  \fdivs including ones that are more sensitive to covering the probability mass or ones that are more sensitive capturing modes. This observation is the base of our analysis on how the OBRS improves Precision and Recall in Section~\ref{subsec:OBRSPR}
\end{itemize}

\begin{figure}[t!]
    \centering
    \includegraphics[width=\linewidth]{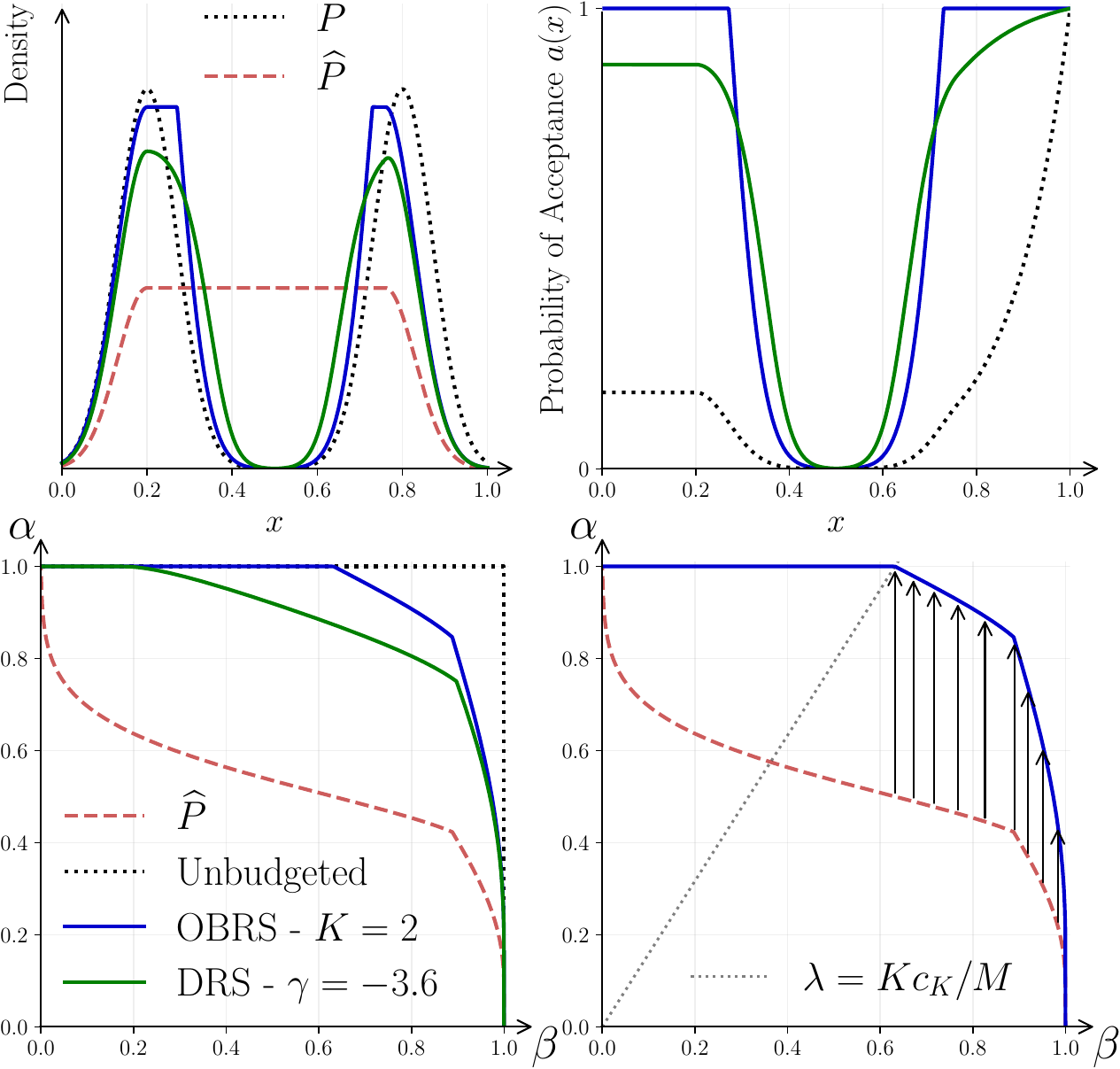}
    \caption{Comparing Unbudgeted, DRS \citep{azadi_discriminator_2019} and OBRS (ours) for a one-dimensional example. 
    DRS and OBRS are tuned to reach an acceptance ratio of $50\%$. TL: The target and learned distributions $P$ and $\whP$, along the refined distributions. TR: The acceptance functions for the unbudgeted rejection sampling (dotted black), OBRS (blue), and the DRS (green). BL: The PR-Curves of the different models. BR: Visualisation of the improvements by the OBRS. The straight dotted line corresponds to $\lambda=Kc_K/M$.
    }
    \label{fig:1D}
\end{figure}

Figure~\ref{fig:1D} illustrates Theorem~\ref{thm:optrej} on a one-dimensional example. On the top-left of Figure~\ref{fig:1D}, we draw $\whP$ and $P$, where the target distribution $P$. On the top-right, DRS (green) and OBRS (blue) are compared.  We can observe how $\aobs$ and $a_{\mathrm{DRS}}$ lead to different refined distributions $\wtP_a$.

Finally, we present a theorem showing how much OBRS reduces the $f$-divergence in general. We show that for any \fdiv, the improvement is linear to $K$. We also give a tighter version of the bound for the Kullback-Leibler Divergence. Proofs for both results are in Appendix~\ref{app:sec:bounds}.
\begin{theorem}\label{thm:bound}
For any $f$-divergence,
we have 
\[
\Df\left(P\Vert\wtP_a\right)\le\Df\left(P\Vert\whP\right)-\min\left(1,\frac{K-1}{M}\right)\Df\left(P\Vert\whP\right)
\]
and for Kullback-Leibler we have for $\gamma=\frac{\log K}{\log M}$
\[
\KL(P\Vert\wtP)\le(1-\gamma)\left(\KL(P\Vert\whP)-\D^\mathrm{R}_{\gamma}(P\Vert\whP)\right)
\]
where $\D^\mathrm{R}_{\gamma}$ is the R\'enyi divergence with parameter $\beta$ 
\end{theorem}

\subsection{Improvement on the Precision/Recall}\label{subsec:OBRSPR}

A number of recent publications have stressed the importance of measuring the quality of generative models using precision and recall  \citep{ kynkaanniemi_improved_2019,djolonga_precision-recall_2020, naeem_reliable_2020, cheema_precision_2023, kim_toppr_2023, verine_precision-recall_2023, bronnec_exploring_2024}. In the context of generative modeling, \emph{precision} measures the quality of the generated samples,  while \emph{recall} which measures the diversity of the samples. In this section, we introduce Theorem~\ref{thm:improvalpha}, that provide a clear characterization of the improvement provided by OBRS in terms of precision and recall. 




To model the set of all precision-recall tradeoffs, \cite{simon_revisiting_2019} introduced the notion of \emph{Precision-Recall Curve} between to distributions $P$ and $\widehat{P}$. This curve, named $\PRd(P,\widehat{P})$,  is composed of all coordinate points $\left(\alpha_\lambda, \beta_\lambda\right)_{\lambda\in[0,+\infty]} \in [0, 1]^2$ defined as follows.
\begin{align}
\begin{cases}
        \alpha_\lambda = \E_{\whP}\left[\min\left\{\lambda \frac{p(\vx)}{\whp(\vx)}, 1\right\}\right] \\[3pt]
        \beta_\lambda =\E_{P}\left[\min\left\{1,\frac{\whp(\vx)}{p(\vx)}\frac{1}{\lambda}\right\}\right]
\end{cases}
\end{align}
Intuitively, if  $(\alpha,\beta)$ belongs to the Precision-Recall curve, this means that for some fixed recall $\beta$, the best achievable precision is $\alpha$. A more comprehensive definition and explanation of Precision/Recall for generative models is given in Appendix~\ref{app:sec:PR}.

\begin{theorem}[Precision and Recall Improvement]\label{thm:improvalpha}
Let $K\leq M$ be the budget for the OBRS detailed  in Theorem~\ref{thm:optrej}. For any $(\alpha,\beta)\in \PRd(P,\widehat{P})$ we have $\left(\alpha',\beta\right)\in \PRd(P,\wtP_{\aobs})$ with $\alpha'=\min\left\{1,K\alpha\right\}$.
\end{theorem}

This theorem shows that for any fixed recall, OBRS consistently improves precision. More precisely,  the improved PR-curve is a $K$-fold vertical scaling of the intial PR-curve capped to $1$. The bottom-right part of Figure \ref{fig:1D} illustrates this phenomenon.
In Appendix~\ref{app:sec:djolonga}, we show a similar theorem for another popular precision-recall measure called the \emph{Information Divergence Frontier} \citep{djolonga_precision-recall_2020}.

\section{TRAINING WITH OBRS}\label{sec:trainobrs}
In traditional GAN training, the generator $G$ is optimized without considering any \textit{a priori} knowledge regarding the rejection sampling that occurs post-training, potentially leading to suboptimal generative models. This section advocates training with OBRS (Tw/OBRS) for GANs models. First, we introduce the theoretical improvements and the observed effects on the loss function.  Then, we introduce an algorithm to incoporate OBRS in the training procedure.

\subsection{Principle of Training with OBRS}\label{subsec: TOBRS principle}

Let us reformulate Rejection Sampling in the domain of probability measures. Define $B_K(\whP)=\left\{\wtP\in \cal P(\cX) | \D^\mathrm{R}_{\infty}(\wtP\Vert \whP)\leq  \log K \right\}$, where $\D^\mathrm{R}_{\infty}(\wtP\Vert\whP)= \log(\sup_{\vx}\left\{\wtp(\vx)/\whp(\vx)\right\})$ denotes the max-divergence (a limiting case of the $\alpha$-Rényi Divergence $\Da^R$ with $\alpha\to\infty$).
Note that $B_K(\whP)$ is a convex set. Moreover, the following inclusion holds for any $K_2\geq K_1\geq 1$.
\begin{align}
    B_{K_1}(\whP) \subseteq B_{K_2}(\whP).
\end{align}
The following lemma shows that $B_K(\whP)$ characterizes the set of distributions allowed by a budgeted rejection sampling procedure.
\begin{lemma}\label{lem: ball}
 $\wtP\in B_K(\whP)$ if and only if there exist an acceptance function $a:\setX\to[0, 1]$, and a normalization constant $Z$ such that $\wtp(\vx)=\whp(\vx)a(\vx)/Z$ and the acceptance rate is greater that $1/K$.
\end{lemma}
Consider $\wh{\cal P}=\left\{\whP=G_\# Q \vert   G\in \cal G\right\}$, the set of  all distributions $\whP$ induced by the generator functions from a fixed latent distribution $Q$. 
By separating the training process from the rejection sampling process, we are, in effect, solving a two-step minimization problem given below.
\begin{align}
    \text{First solve }\whP\opt\in \argmin_{\whP\in \whcP} \Df (P\Vert \whP);\label{eq: obj train}\\
    \text{Next solve }\wtP\opt\in \argmin_{\wtP\in B_K(\whP\opt)} \Df(P\Vert \wtP).\label{eq: obj RS}
\end{align}
Crucially,  $\whP\opt$ is chosen by the training procedure to optimize \eqref{eq: obj train} whereas the final output distribution $\wtP\opt$ is assessed via \eqref{eq: obj RS}, resulting in a mismatched objective. By incorporating the rejection scheme into the training objective, we get:
\begin{align}\label{eq:trainobrs_prequel}
 \min_{\whP\in\wh{\cal P}}\min_{\wtP\in B_{K}(\whP)}\Df(P\Vert\wtP).
\end{align}


\begin{figure}[t!]
\subfloat[The loss $\GAN(P\Vert\wtP)$ is calculated for every parameter $\theta$ for different budgets $K$. For $K=1$, it is $\GAN(P\Vert\whP)$. \label{fig:DsmoothDiv}]{ \includegraphics[width=\linewidth]{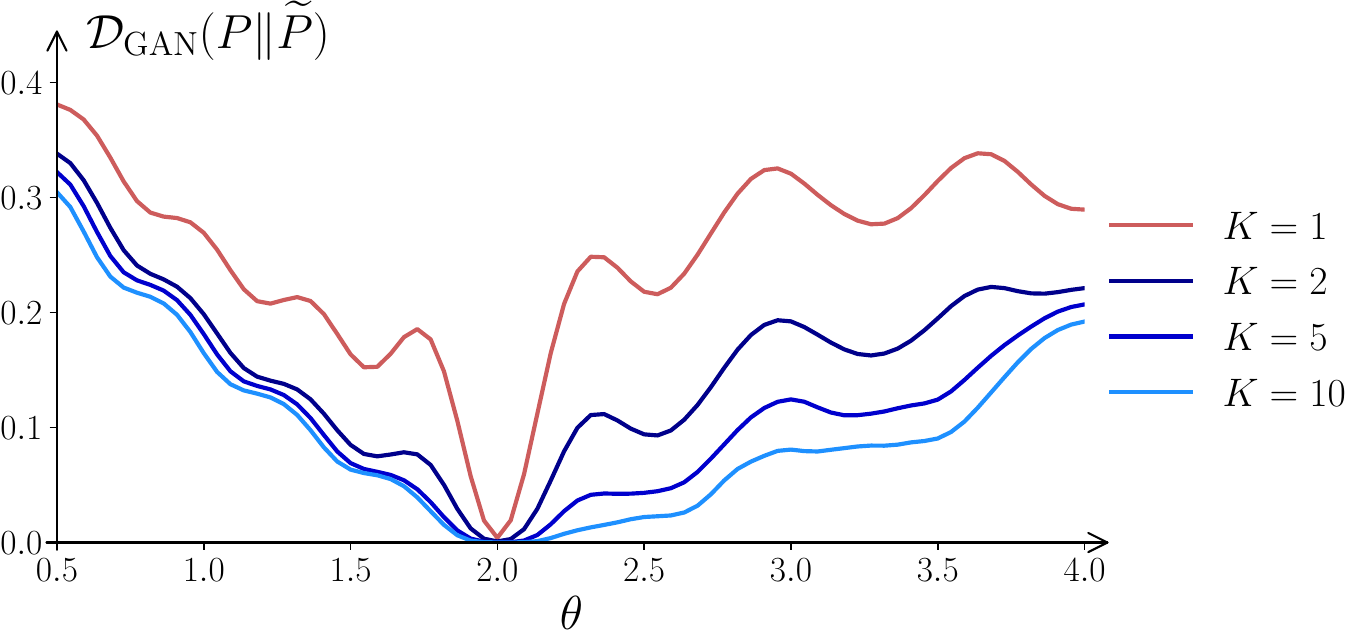}}
\vspace{10pt}
\subfloat[{The target distribution $P$ (in dotted black) is a mixture of 10 Gaussians with $\sigma^2=0.3$. The approximate distribution is a mixture of 10 Gaussians of $\sigma^2=0.4$ separated by $\theta$. $\wtP$ is computed with OBRS and a budget of $K=2$. Densities are re-scaled and cropped to $\left[-7, 7\right]$ for readability.  \label{fig:DsmoothP}}]{ \includegraphics[width=\linewidth]{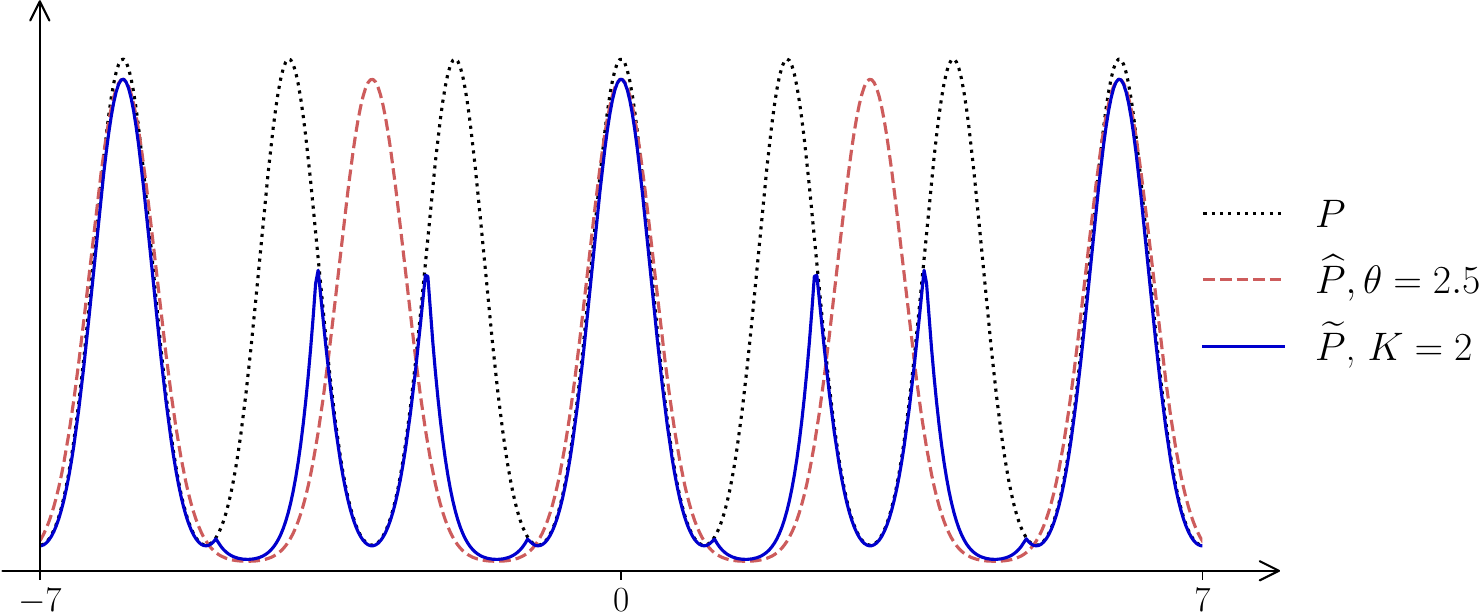}}
   \caption{The loss $\GAN(P\Vert\wtP)$ is flatten by the OBRS scheme. As the budget $K$ increases, the number of local minima decreases. } \label{fig:Smooth}
\end{figure}
This re-framing of the objective has the following advantages.

\begin{figure}[t!]
\subfloat[$\GAN$ is calculated between $P$ and $\whP$ (left) or $\wtP$ (right), for all parameters $(\mu, \sigma)$. The stars ($\filledstar$) highlight the minima.\label{fig:1DGaussiansDivs}]{\includegraphics[width=1\linewidth]{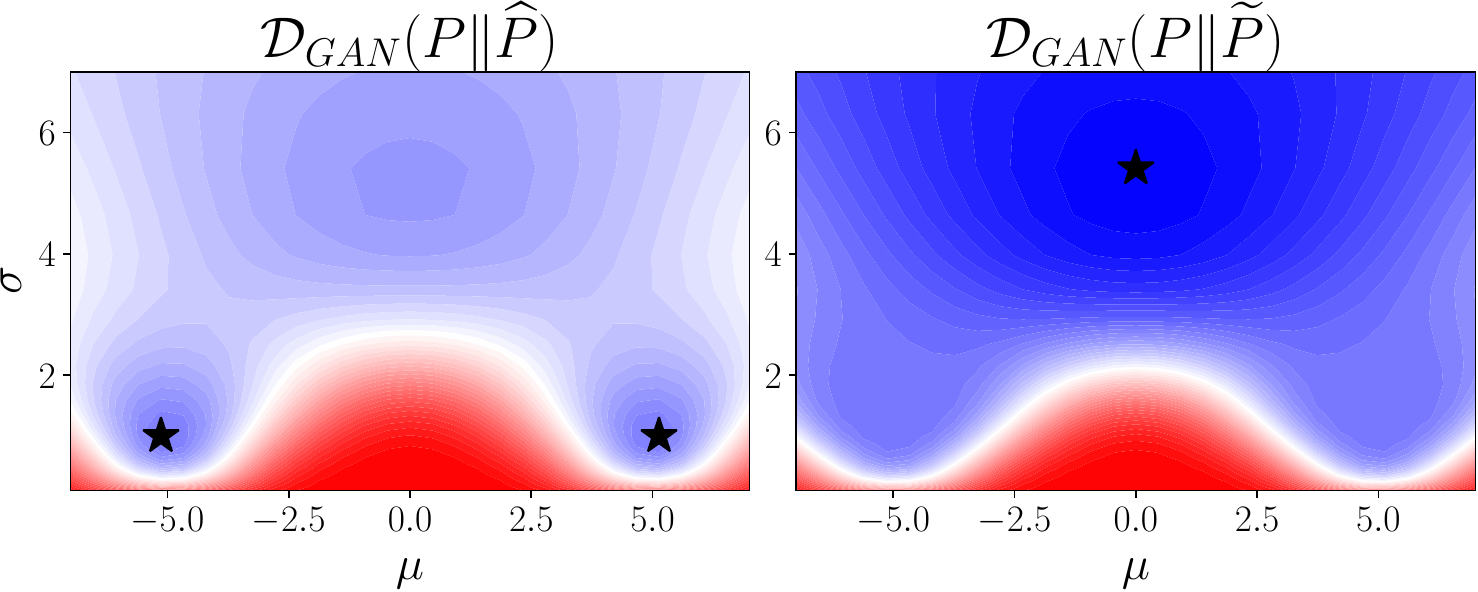}}
\vspace{10pt}
\subfloat[For the target $P$ (in dotted black), the approximation $\whP$ (in dashed red) corresponds to a minima in Fig.~\ref{fig:1DGaussiansDivs}. The post-OBRS distribution $\wtP$ (in solid blue) is for $K=2$.]{\includegraphics[width=1\linewidth]{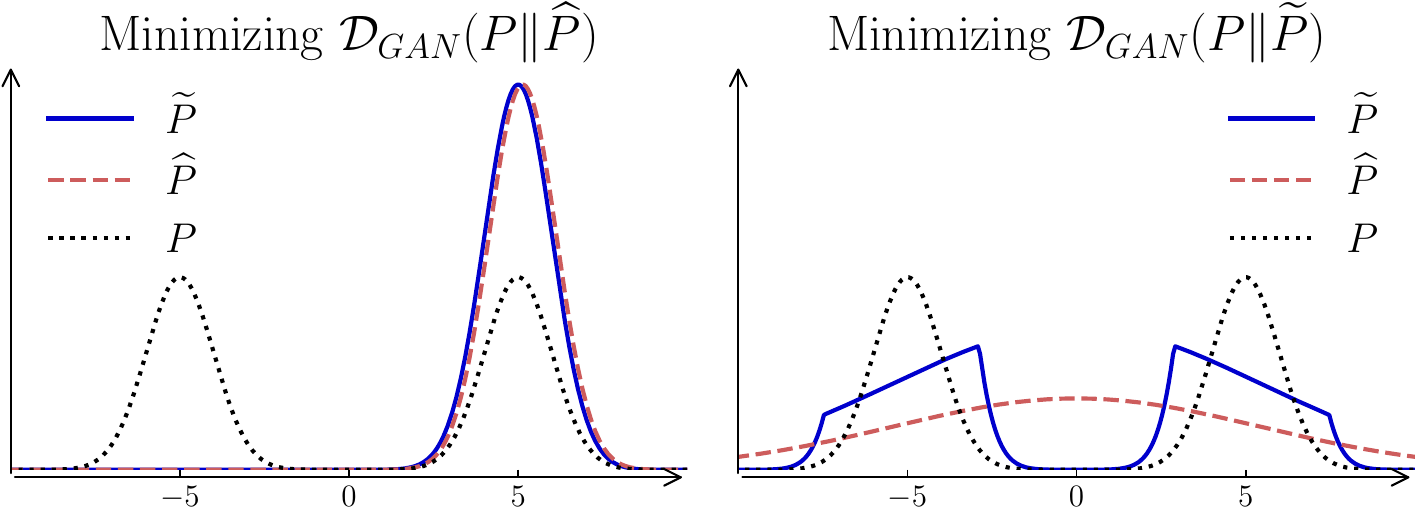}}
\caption{One dimensional example of $\Df$ minimization: $P$, a mixture of two gaussians is approximated by Gaussian  $\whP=\cal N(\mu, \sigma^2)$. The distribution $\whP$ that minimizes $\GAN(P\Vert\wtP)$ leads to a drastically better approximation $\wtP$ of $P$ than the post rejection distribution induced by the $\whP$ that minimizes $\GAN(P\Vert\whP)$.  }
    \label{fig:1DoptimalG}
\end{figure}
\begin{figure*}[b!]
\vspace{-10pt}
   \begin{minipage}[b]{0.5 \textwidth}
\begin{algorithm}[H]
\caption{Traditional GAN training procedure}
\label{alg:naiveapproach}
\begin{small}
\begin{algorithmic}
\Repeat
  \State Update $T$ by ascending the gradient of $$
  \E_{\vx\sim P}\left[T(\vx) \right] - \E_{\vx\sim \wh P_G}\left[ f^*(T(\vx))\right].$$
  \State Update $G$ by descending the gradient of $$
   - \E_{\vx\sim \wh P_G}\left[ f^*(T(\vx))\right].$$
\Until{convergence.}
\end{algorithmic}
\end{small}
\end{algorithm}
   \end{minipage}
      \begin{minipage}[b]{0.5 \textwidth}
       \begin{algorithm}[H]
\caption{GAN Tw/OBRS}
\label{alg:TOBRS}
\begin{small}    
\begin{algorithmic} 
\Repeat
  \State Update $T$ by ascending the gradient of $$\E_{\vx\sim P}\left[T(\vx) \right] - \E_{\vx\sim \wh P_G}\left[ f^*(T(\vx))\right].$$
  \State Update $c_K$ such that $\EE{\whP_G}{\aobs(\vx)} \leq 1/K$. \\
  \hspace{2cm}(See Alg \ref{alg:searchc} in App \ref{app:subsec:algoc}) for details.)
  \State Update $G$ by descending the gradient of $$
   \E_{\vx\sim \wh P_G}\left[K\aobs(\vx) f\left(\frac{r\left(\vx\right)}{K\aobs(\vx)}\right)\right].$$
\Until{convergence.}
\end{algorithmic}
\end{small}
\end{algorithm}
   \end{minipage}
\end{figure*}

\paragraph{Flattening effect on the parameter landscape:}
Note that the objective in \eqref{eq:trainobrs_prequel} can be written as, 
\begin{align}\label{eq:trainobrs}
    \min_{\wtP\in \bigcup_{\whP \in \wh{\cal P}}B_K(\whP)}  \Df(P\Vert \wtP).
\end{align}
Observe that the domain of $\wtP$ is the dilatation of $\whcP$ by the convex set $B_K$, resulting in a smoother set $\bigcup_{\whP \in \wh{\cal P}}B_K(\whP)$. In practice, this results in a flattened loss landscape for optimizing over $\whP$ as in \eqref{eq:trainobrs_prequel}, thus preventing the model from getting stuck in suboptimal local minima. This concept is demonstrated with two examples, showcasing its ability to flatten the parameter landscape. Firstly, Figure~\ref{fig:Smooth} shows a one-dimensional example where the loss is flattened by OBRS. Secondly, Figure~\ref{fig:MNISTSmooth} illustrates a GAN trained to generate MNIST samples. Like in the approach of \cite{li_visualizing_2018}, we present the loss in two arbitrary directions of the parameter space. We observe that OBRS not only reduces the loss but also flattens the landscape, thereby aiding in avoiding local minima. More details are provided in Appendix~\ref{app:sec:MNIST}.


\paragraph{A mass-covering $\whP$:}
The optimal $\whP$ might be different between \eqref{eq: obj train} and \eqref{eq:trainobrs_prequel}. Theorem~\ref{thm:improvalpha} explicitly states that OBRS is more efficient on mass-covering models rather than mode-seeking ones, as it improves precision. Taking the rejection sampling into account in the training procedure is pushing the distribution $\whP$ to be more \emph{suitable} for reject, and thus: more-mode covering.  
For instance, consider a target distribution $P$ as the Gaussian mixture presented in Figure~\ref{fig:1DoptimalG}. Assume that the expressivity of $\whP$ is limited to a single Gaussian $\cal N(\mu, \sigma)$. If the goal is to naively minimize $\GAN$ (defined in Table~\ref{tab:fdiv}), then, because of the mode-covering property of the divergence, $\whP$ covers only one mode. In that case, Theorem~\ref{thm:improvalpha} shows that only the precision can be improved, and thus a limited-budget rejection sampling scheme will not reshape $\whP$, leading to poor coverage.  While, if $\mu$ and $\sigma$ are set to directly minimize $\GAN(P\Vert \wtP)$, then the distribution $\whP$ changes drastically into a mass covering distribution, allowing the rejection process to match more closely (in terms of $\GAN$).

\begin{table*}[!t]
 \caption{Mixture of 25 Gaussiansin 2D. Metrics for the different sampling Methods: Recall ($\uparrow$) and Precision ($\uparrow$) as defined in \cite{dumoulin_adversarially_2017};  Calls ($\downarrow$) of $G$ and $D$ are the number of times the models are called to generate $2500$ samples; Time ($\downarrow$) is the time required to generated $2500$ samples.
 For every metrics, we give the average and standard deviation for $1000$ generations of $2500$ samples. Best results are emphasized in \textbf{bold}.} \label{tab:2Dexp}
    \begin{tabularx}{\textwidth}{X|ccccc}
        Model & Recall ($\%$) & Precision ($\%$) & Call of $G$  & Call of $D$ & Time (s)\\\hline
Baseline $G$ & $100.0 \pm 0.0$ & $55.80 \pm 0.99$ & $2500 \pm 0$ & $0 \pm 0$ & $0.03 \pm 0.01$ \\  \hline\addlinespace[0.1em]
OBRS (ours) ($K = 2.6$) & $100.0 \pm 0.0$ & $\mathbf{92.54 \pm 0.54}$ & $6262 \pm 92$ & $\mathbf{6262 \pm 92}$ & $\mathbf{0.45 \pm 0.01}$ \\
DRS ($\gamma = -0.9$) & $100.0 \pm 0.0$ & $89.87 \pm 0.59$ & $6411 \pm 93$ & $6411 \pm 93$ & $\mathbf{0.46 \pm 0.01}$ \\
MH-GAN ($n_{\mathrm{ite}}=2$) & $100.0 \pm 0.0$ & $89.98 \pm 0.61$ & $6415 \pm 45$ & $19292 \pm 23$ & $6.38 \pm 0.09$ \\
DOT ($n_{\mathrm{ite}}=3$) & $100.0 \pm 0.0$ & $58.47 \pm 1.00$ & $\mathbf{2500 \pm 0}$ & $7500 \pm 0$ & $0.94 \pm 0.14$ \\
DG$f$low ($n_{\mathrm{ite}}=3$) & $94.81 \pm 2.83$ & $56.00 \pm 1.02$ & $7500 \pm 0$ & $7500 \pm 0$ & $0.67 \pm 0.13$ \\ \hline
    \end{tabularx}
\end{table*}

 \subsection{Implementing Tw/OBRS}
To implement Tw/OBRS i.e., to solve for the combined objective in \eqref{eq:trainobrs_prequel}, we need samples from $\wtP$ in order to evaluate the final loss $\Df(P\Vert \wtP)$.
One direct approach is to train a discriminator $\widetilde T$ to estimate $D(P\Vert \wtP)$, and then training a generator to minimize the estimate by minimizing:
\begin{align}
    -\E_{\wtP}\left[f\s(\widetilde T (\vx))\right]=-\E_{\whP}\left[Ka_O(\vx)f\s(\widetilde T (\vx))\right].
\end{align}
But, this would require to compute $\aobs$ which depends on $r(\vx)$ that is obtained by training a discriminator $T$ on $D(P\Vert \whP)$. In other words, it would require two discriminators $T$ and $\widetilde T$. Instead, we propose a method that would require training only a single discriminator $T$ and leverage the primal form of \fdiv give in \eqref{eq:fdiv} to estimate $\Df(P\Vert \wtP)$ as follows. 
\begin{align*}
        \Df(P\Vert \wtP) 
        &= \E_{\whP}\left [ f\left( \frac{p(\vx)}{\widetilde{p}(\vx)}\right)  \right]\\
        &= \E_{\wtP}\left [ \frac{\widetilde{p}(\vx)}{\whp(\vx)} f\left( \frac{p(\vx)}{\widehat{p}(\vx)}\frac{\whp(\vx)}{\widetilde{p}(\vx)}\right)  \right]\\
        &= \E_{\whP}\left [ Ka_O(\vx) f\left(\frac{\nabla f\s\left(T(\vx)\right)}{Ka_O(\vx)}\right)\right],
\end{align*}
where the last equality follows by plugging in the likelihood ratio estimate of \eqref{eq:densityestimation}.
We propose  Algorithm~\ref{alg:TOBRS} that trains a model $G$ to minimize the estimated \fdiv between $P$ and $\wtP$. This algorithm is, in terms of algorithmic complexity, equivalent to the traditional GAN training procedure detailed in Algorithm~\ref{alg:naiveapproach}. We detail in Appendix~\ref{app:subsec:complexity} how the update of $c_K$ affects the time of the training procedure.

\section{EXPERIMENTAL RESULTS}\label{sec:xp}

\subsection{Sampling methods for 25 Gaussians}
We first evaluate our methods  using a grid of $5\times 5$ two-dimensional Gaussians following the experimental protocol used by the authors of other GANs sampling methods \citep{azadi_discriminator_2019, turner_metropolis-hastings_2019, ansari_refining_2021, che_your_2021, tanaka_discriminator_2019}. Hyperparameters of every methods are set to achieve about 40\% acceptance rates ($K=2.6$) in order to obtain comparable performances. We then measure precision and recall using the methodology proposed by \citet{dumoulin_adversarially_2017} as well as execution time for every method. Results are presented in Table~\ref{tab:2Dexp}. We observe almost every method achieve 100\% recall but that OBRS outperforms all other methods in terms of both precision and sampling time. Detailed experimental settings and a discussion how budget and time affect the performances are available in Appendix~\ref{app:subsec:xp2D}. We further demonstrate the impact of the distribution $\whP$ and particularly the influence of $M$ on the performance disparity between OBRS and DRS. In our experiments, we select hyperparameters to achieve similar acceptance rates. Yet, for varying budgets, the difference of efficiency between OBRS and DRS may increase. Figure~\ref{app:fig:obrsvsdrs} illustrates the behavior of these methods for different values of $M$. 

\begin{figure}[t!]
    \centering
    \includegraphics[width=\linewidth]{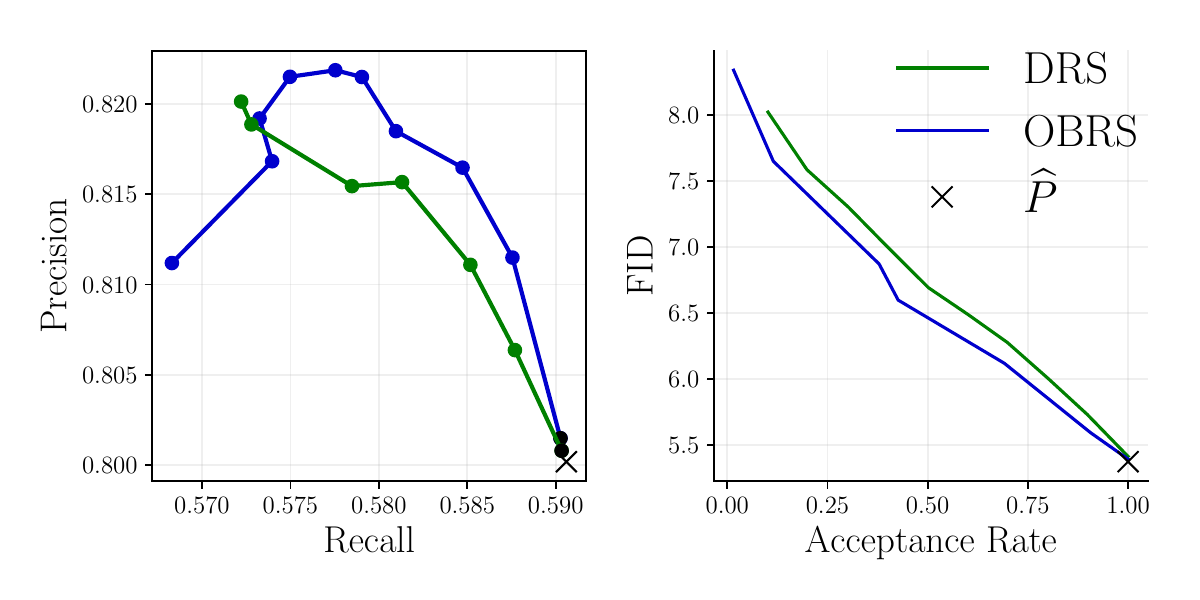}
    \caption{DRS vs. OBRS on a pre-trained BigGAN on CelebA.  GAN Baseline model $\whP_G$, Post-rejection distribution $\wtP_{a_{\mathrm{DRS}}}$ with DRS, Post-rejection distribution $\wtP_{\aobs}$ with OBRS. (Left) Precision and Recall for different budgets. Lowest budget in black. (Right) FID as a function of the acceptance rate.}
    \label{fig:DRSvsOBRS}
\end{figure}

\begin{table}[b!]
\centering
     \caption{OBRS applied on a Diffusion Model EDM \citep{karras_elucidating_2022} with a classifier trained by \citet{kim_refining_2023}. We observe no relevant improvement on the Recall, a slight improvement on the Precision and a significant improvement on the FID.} \label{tab:cifaredm}
     \begin{tabular}{c|ccc}
        Acceptance rate & FID & P & R   \\\hline
        0.25 & $1.57$ & $78.48$  &  $86.73$  \\
        0.50 & $1.58$ &  $78.23$  & $86.05$ \\
        0.75 & $1.77$ & $77.94$  & $86.54$  \\
        1 & $1.97$ & $77.91$ & $86.62$     \\\hline
    \end{tabular}
\end{table}

\subsection{OBRS for a pre-trained model}

We now investigate how OBRS performs in high dimension. To do so, we use a BigGAN model  \citep{brock_large_2019} pre-trained on CelebA. Note that the model is originally  trained with the hinge loss which is saturating according to \cite{azadi_discriminator_2019} and leads to a discriminator that is not suitable for density estimation. Thus, following their recommendations, we fine-tune the discriminator to improve density estimation. In Figure~\ref{fig:DRSvsOBRS}, we evaluate the resulting model in terms of  Precision and Recall \citep{kynkaanniemi_improved_2019} for 10k samples for $k=5$ and the FID for 50k samples. When evaluating for multiple budgets between $1$ and $M$, we observe that OBRS outperforms DRS in terms of FID and, for acceptance rates greater than $30\%$, in terms of precision. We also test the rejection procedure on a diffusion model on CIFAR-10 trained by \citet{karras_elucidating_2022} with a discriminator trained by \citet{kim_refining_2023}. In Table~\ref{tab:cifaredm}, that the OBRS method improves the FID by a significant margin, while the precision is slightly improved and the recall remains stable.

\begin{figure}[t!]
    \subfloat[FID during training. ]{\includegraphics[width=\linewidth]{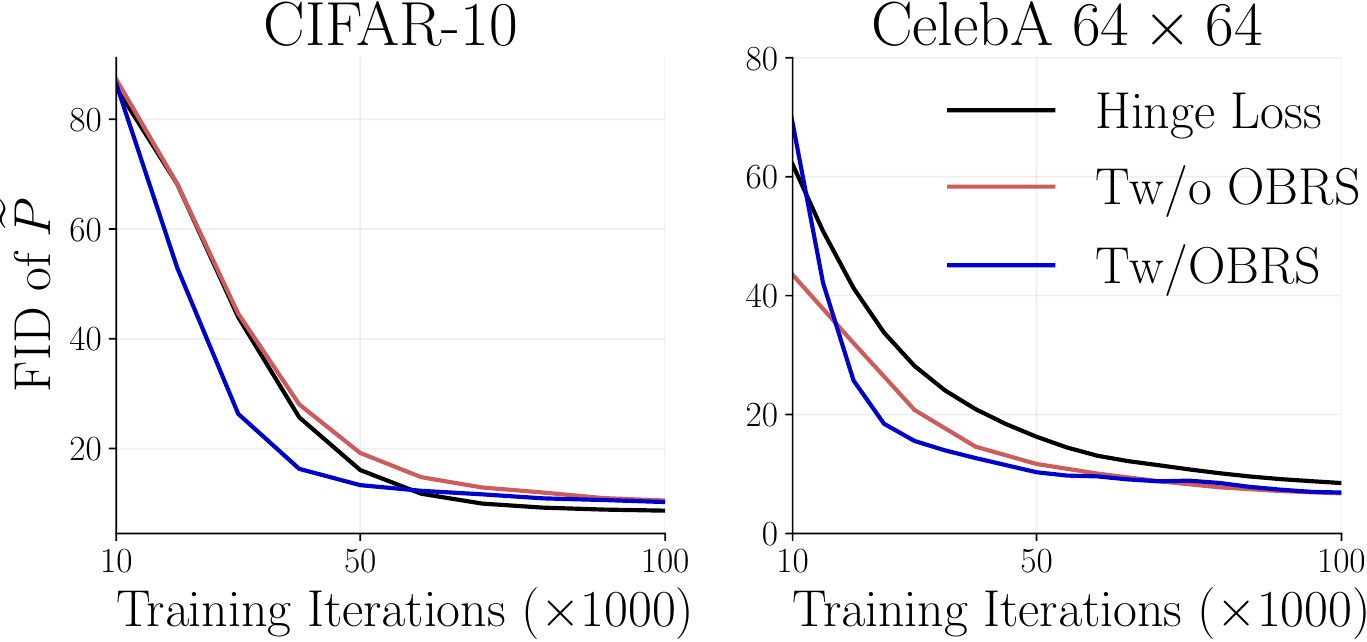}}
    \vspace{10pt}
    \subfloat[Metrics at convergence all between]{
    \begin{tabular}{cc|ccc}
       Dataset & Method & FID & P & R \\\hline  
       CIFAR-10   & Hinge Loss & $\mathbf{8.43}$ & $\mathbf{84.50}$ & $65.39$\\
        $32 \times 32$ & Tw/oOBRS & $11.18$ & $83.24$ & $68.44$ \\
        & Tw/OBRS & $8.98$ & $80.09$ & $\mathbf{69.63}$ \\\hline
       CelebA  & Hinge Loss & $9.33$ & $\mathbf{80.23}$ & $57.78$\\
       $64\times 64$ & Tw/oOBRS & $6.33$ & $78.28$ & $\mathbf{61.02}$ \\
       & Tw/OBRS & $\mathbf{5.42}$ & $78.01$ & $60.29$ \\\hline
    \end{tabular}}
    \caption{Training w/OBRS. We use a BigGAN \citep{brock_large_2019}  trained with hinge loss as a baseline compared to $\GAN$ trained without (Tw/oOBRS) and with (Tw/OBRS) OBRS. All metrics are calculated between $P$ and $\wtP$ with a budget of $K=2$.}
    \label{fig:trainingobrs}
\end{figure}
\subsection{Training with OBRS}
We investigate  the  Tw/OBRS method discussed in Section~\ref{sec:trainobrs}. We use BigGAN and trained in 3 ways: (1) hinge loss (baseline), (2) $\GAN$ loss using the standard method from Algorithm~\ref{alg:naiveapproach} (Tw/oOBRS), and (3) $\GAN$ loss using our new method from Algorithm~\ref{alg:TOBRS} (Tw/OBRS), with $K=2$.
We tested these methods on the  CIFAR-10 and CelebA datasets and showed the results in Figure~\ref{fig:trainingobrs}. To be fair, we evaluate all 3 models on the refined distribution $\wtP$ with a budget of $K=2$. In our experiments, our method demonstrates accelerated convergence and superior performance in terms of FID compared to the alternative approaches. While there is a notable increase in Recall, there is a slight trade-off in Precision.

\begin{table}[t!]
    \centering
    \caption{Fine-tuning with Tw/OBRS. Pre-trained BigGAN  fine-tuned on the $\GAN$ with OBRS. We use a BigGAN trained with the hinge loss as a baseline. All metrics are calculated between the target distribution $P$ and the post-distribution with a budget of $K=2$. }
    \label{tab:finetune}
    \begin{tabular}{cc|ccc}
       Dataset & Method & FID & P & R \\\hline
       CelebA  & Hinge Loss & $9.33$ & $\mathbf{80.23}$ & $57
       .78$\\
       $64\times 64$ & w/OBRS & $\mathbf{3.74}$ & $74.40$ & $\mathbf{65.15}$ \\\hline
        ImageNet   & Hinge Loss & $12.18$ & $\mathbf{27.75}$ & $34.33$\\
        $128 \times 128$ & w/OBRS & $\mathbf{11.65}$ & $26.84$ & $\mathbf{46.16}$ \\\hline
    \end{tabular}
\end{table}

We also fine-tuned models trained on the hinge loss using our method. We used BigGAN models trained on CelebA and ImageNet in Table~\ref{tab:finetune}.  

This set of experiments on training models accounting for rejection shows that intuitions presented in Section~\ref{sec:trainobrs} are confirmed empirically: the models converge faster and leads to an optimal $G$ more mass-covering.

\section{CONCLUSION AND FUTURE WORKS}
In this paper, we introduce the concept of budgeted rejection sampling and go a step further by presenting an optimal acceptance function for this sampling method. We use this method to improve discriminator-based models. However, we believe that our Tw/OBRS scheme can be applied to a broader class of generative models. For instance, one could use our approach for Normalizing Flows using the Learned Acceptance/Rejection  Sampling method of \cite{stimper_resampling_2022}. For diffusion models, there is much greater flexibility to refine the distribution through rejection sampling because one can choose to accept a sample at any iteration of the diffusion process. 
Building on this, one might modify the discriminator refined scored-based sampling of \cite{kim_refining_2023} to improve diffusion models.  

Our work emphasizes the importance of incorporating rejection during the training phase. Practically, this inclusion results in generating distributions with greater recall, ensuring that rejection sampling becomes more effective. 
In Subsection~\ref{subsec: TOBRS principle}, we hypothesize that this improvement may be due to the dilation of the possible set of output distributions
$\whcP$ by a convex set $B_K(\whP)$ during rejection. It would be interesting to further analyze this phenomenon through a theoretical lens. 

\acknowledgments{We are grateful for the grant of access to computing resources at the IDRIS Jean Zay cluster under
allocations No. AD011011296 and No. AD011014053 made by GENCI.}
\bibliographystyle{apalike}
\bibliography{references}
\onecolumn
\aistatstitle{Optimal Budgeted Rejection Sampling for Generative Models\\
Supplementary Materials}
\counterwithin{figure}{section}
\counterwithin{algorithm}{section}
\counterwithin{table}{section}
\appendix
\renewcommand{\thetable}{\thesection.\arabic{table}}
\setcounter{table}{0}
\renewcommand{\thefigure}{\thesection.\arabic{figure}}
\setcounter{figure}{0}
\section{Precision and Recall for Generative Models}\label{app:sec:PR}

According to \cite{sajjadi_assessing_2018}, \emph{The key intuition is that precision should measure how much of $\whP$ can be generated by a “part” of $P$ while recall should measure how much of $P$ can be generated by a “part” of $\whP$.}
In this paper, we evaluate how Optimal Budgeted Rejection Sampling affects a given model. To evaluate the improvement theoretically, we need a mathematically grounded method of assessing models and we need this method to assess quality and diversity independently. To do so, we leverage the notion of \emph{PR-Curves} introduced by \cite{sajjadi_assessing_2018} and revisited for continuous distributions by \cite{simon_revisiting_2019}.

\subsection{From the discrete to the continuous case}

\begin{definition}[Precision and Recall - \citep{sajjadi_assessing_2018}]
For $\alpha, \beta \in[0,1]$,  the probability distribution $\whP$ has a precision $\alpha$ at recall $\beta$ w.r.t. $P$ if there exist distributions $\mu$, $\nu_P$ and $\nu_{\whP}$ such that
$$
P = \beta \mu +(1-\beta)\nu_P \quad \mbox{and}\quad \whP = \alpha \mu + (1-\alpha )\nu_{\whP}
$$
The component $\nu_P$ denotes the part of $P$ that is “missed” by $\whP$. Similarly, $\nu_{\whP}$ denotes the noise part of $\whP$.\quad
\end{definition}

With this definition, the authors define the set of possible precision-recall pairs: $\PR(P, \whP)$. The frontier of  the set of $\PR(P, \whP)$, is the PR-Curve denoted $\PRd(P, \whP)$, parameterized by $\lambda\in[0, \infty]$ and can be computed with the functions:
\begin{align*}
    \alpha(\lambda) = \sum_{\vx_i\in\Xset} \min\left(\lambda p(\vx_i), \whp(\vx_i) \right) \quad \mbox{and} \quad  \beta(\lambda) =  \sum_{\vx_i\in\Xset} \min\left( p(\vx_i), \whp(\vx_i)/\lambda \right) 
\end{align*}

\begin{figure}[H]

    \subfloat[High Precision Example]{\includegraphics[width=0.45\textwidth]{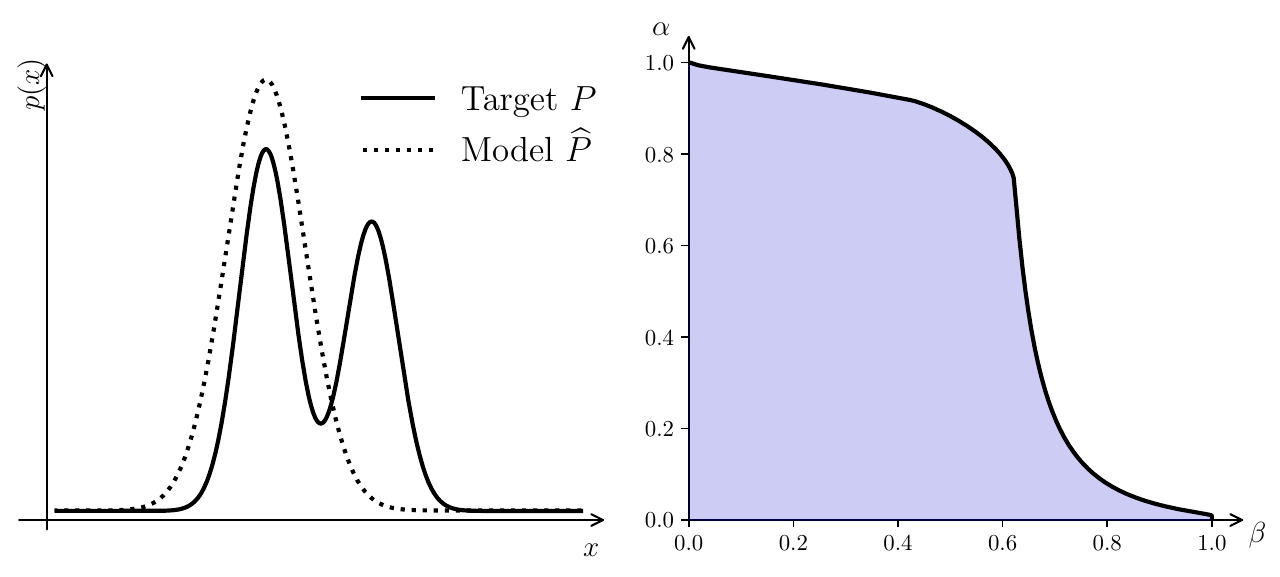}}
 \hfill
    \subfloat[High Recall Example]{ \includegraphics[width=0.45\textwidth]{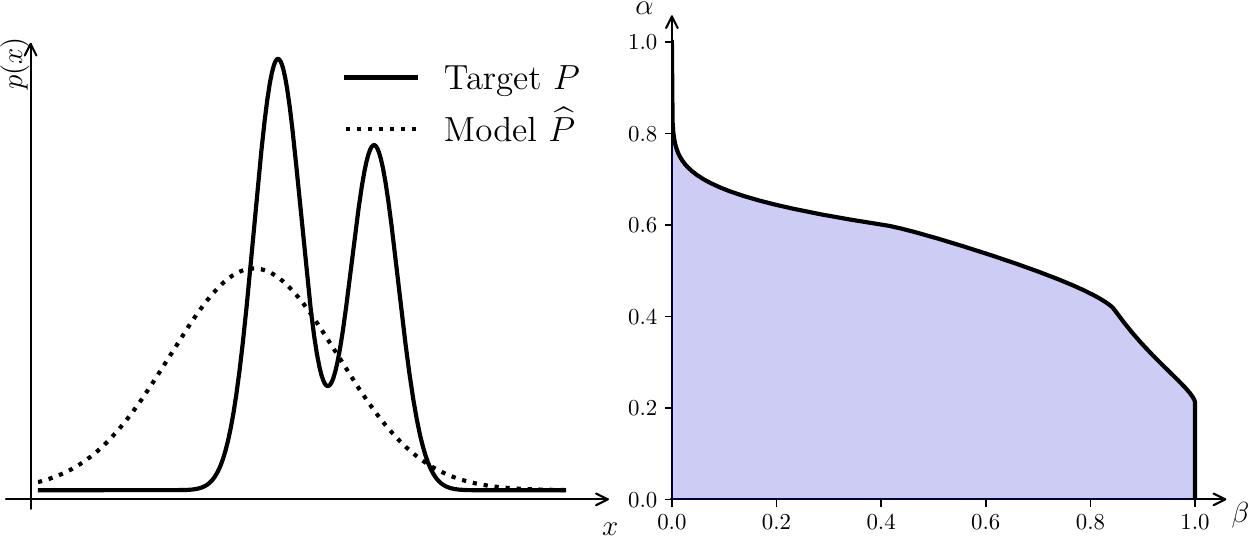}}
\caption{Low dimensional examples of distributions with high recall and limited precision and vice versa, with their corresponding PR-Curves. The colored area is the set $\PR(P, \whP)$ and the solid line in black in the frontier $\PRd(P, \whP)$.}
\end{figure}

\newpage 
This definition has been extended to continuous distributions.
\begin{definition}[Precision and Recall - \citep{simon_revisiting_2019}]
For $\alpha, \beta \in[0,1]$,  the probability distribution $\whP$ has a precision $\alpha$ at recall $\beta$ w.r.t. $P$ if there exists a distribution $\mu$ such that
$$
P \geq \beta \mu  \quad \mbox{and}\quad \whP  \geq \alpha \mu. $$
\end{definition}
If also defines a set $\PR$ and its  frontier is very similar:
\begin{align*}
    \alpha(\lambda) = \int_\Xset \min\left(\lambda p(\vx), \whp(\vx)\right)\d \vx \et \beta(\lambda) = \int_\Xset \min\left( p(\vx), \whp(\vx)/\lambda\right)\d \vx.
\end{align*}
We can reformulate the expressions of the frontier:
\begin{align}
\begin{cases}
        \alpha_\lambda = \E_{\whP}\left[\min\left\{\lambda \frac{p(\vx)}{\whp(\vx)}, 1\right\}\right] \\[3pt]
        \beta_\lambda =\E_{P}\left[\min\left\{1,\frac{\whp(\vx)}{p(\vx)}\frac{1}{\lambda}\right\}\right]
\end{cases}
\end{align}
We can interpret this expression similar to the AUC curve in classification tasks. Consider that the maximum precision and recall are one. Therefore, whenever a point is sampled from $\whP$ such that $\lambda p (\vx)< \whp(\vx)$, the precision decreases further away than $1$. In other terms, all the $\vx$ for which the $\whP$ overestimate $P$ decrease the precision. On the side,  whenever a point is sampled from $P$ such that $\whp(\vx) < \lambda p (\vx)$, the recall decreases further away than $1$, corresponding to the points where $\whP$ underestimates $P$. Let us consider two examples in Figure~\ref{app:fig:goodrecall}~and~\ref{app:fig:goodprecision}.

\begin{figure}[H]
\begin{minipage}[c]{0.3\textwidth}
    \begin{figure}[H]
        \centering
\includegraphics[width=\textwidth]{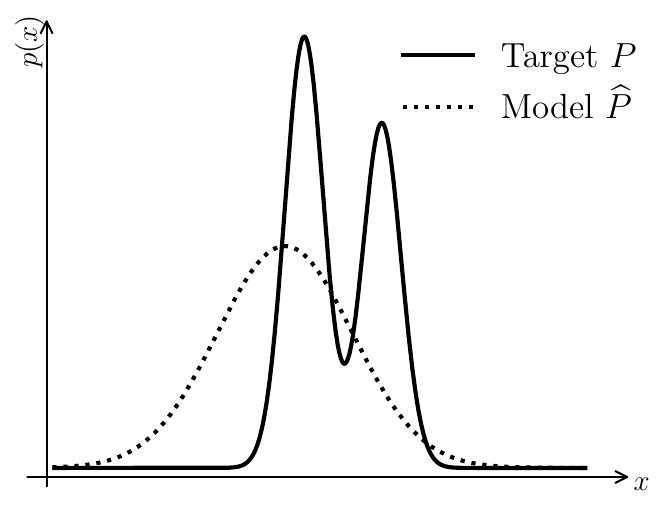}
        \caption{A target distribution $P$ and the approximated distribution $\whP$. In this setup, the model is expected to have a decent recall since it covers $P$ but a poor precision since almost half the weight of $\whP$ does not cover $P$.  In Figures~\ref{app:fig:goodrecalllowl}~and~\ref{app:fig:goodrecallhighl}, we show the PR-Cruve and how it is computed.}
        \label{app:fig:goodrecall}
    \end{figure}
\end{minipage}
\hfill
\begin{minipage}[c]{0.65\textwidth}
    \begin{figure}[H]
        \subfloat[PR-Curve for the model in Figure~\ref{app:fig:goodrecall}, explained for a low $\lambda$. The area in red is $\alpha_\lambda$ and the area in blue is $\beta_\lambda$.]{\includegraphics[width=\textwidth]{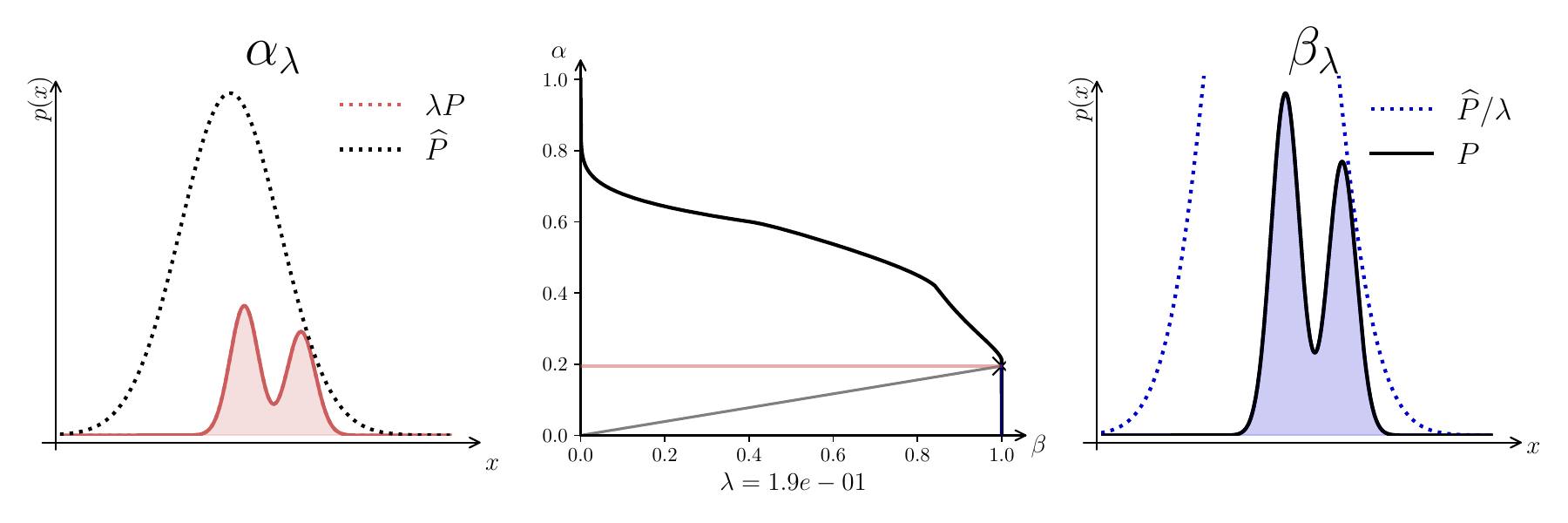}        \label{app:fig:goodrecalllowl}}       
        
        \subfloat[PR-Curve for the model in Figure~\ref{app:fig:goodrecall}, explained for a high $\lambda$. The area in red in $\alpha_\lambda$ and the area in blue is $\beta_\lambda$.]{\includegraphics[width=\textwidth]{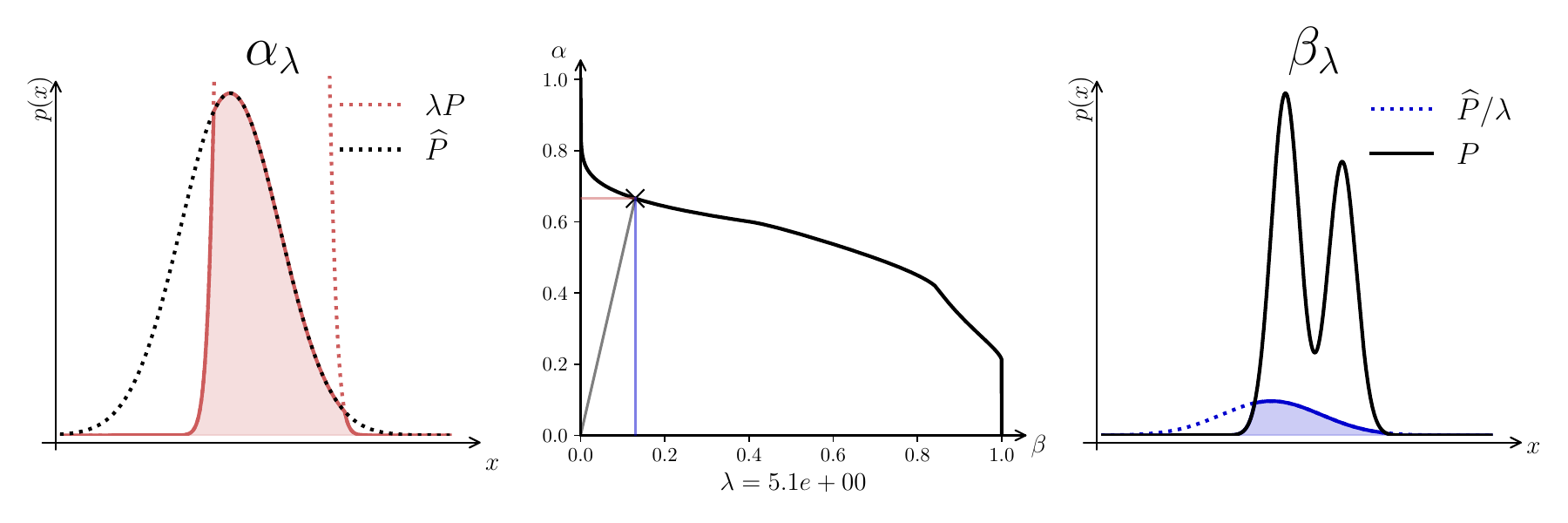}        \label{app:fig:goodrecallhighl}}
        \caption{PR-Curves for the model in Figure~\ref{app:fig:goodrecall}}
    \end{figure}
\end{minipage}
\end{figure}

\begin{figure}[H]
\begin{minipage}[c]{0.3\textwidth}
    \begin{figure}[H]
        \centering
\includegraphics[width=\textwidth]{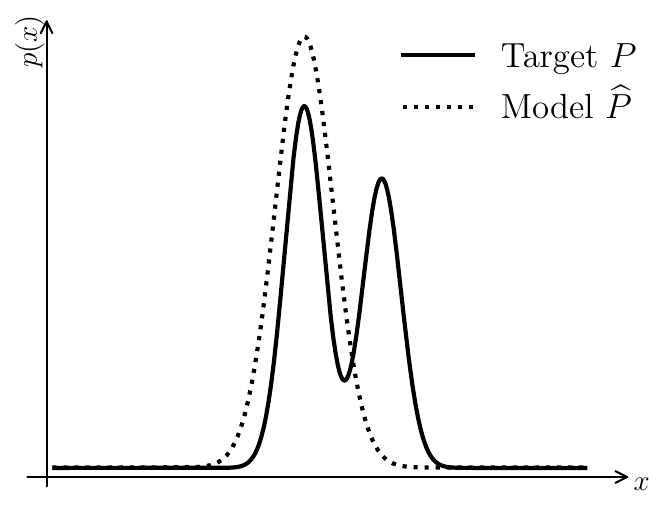}
        \caption{A target distribution $P$ and the approximated distribution $\whP$. In this setup, the model is expected to have a poor recall since it covers almost only half the weight of $P$ but a decent precision since the weight covers the contours of $P$ well. In Figures~\ref{app:fig:goodprecisionlowl}~and~\ref{app:fig:goodprecisionhighl}, we represent the PR-Cruve and how it is computed.}
        \label{app:fig:goodprecision}
    \end{figure}
\end{minipage}
\hfill
\begin{minipage}[c]{0.65\textwidth}
    \begin{figure}[H]
        \subfloat[PR-Curve for the model in Figure~\ref{app:fig:goodprecision}, explained for a low $\lambda$. The area in red is $\alpha_\lambda$ and the area in blue is $\beta_\lambda$.]{\includegraphics[width=\textwidth]{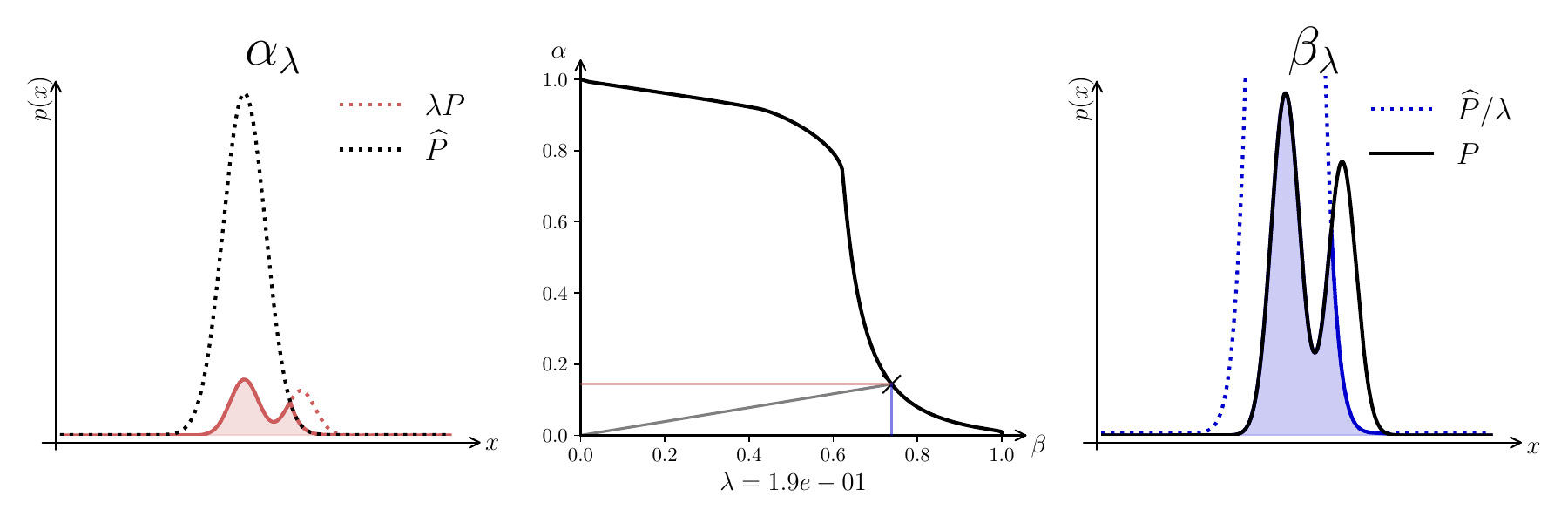}\label{app:fig:goodprecisionlowl}}
        
        \subfloat[PR-Curve for the model in Figure~\ref{app:fig:goodprecision}, explained for a high $\lambda$. The area in red in $\alpha_\lambda$ and the area in blue is $\beta_\lambda$.]{\includegraphics[width=\textwidth]{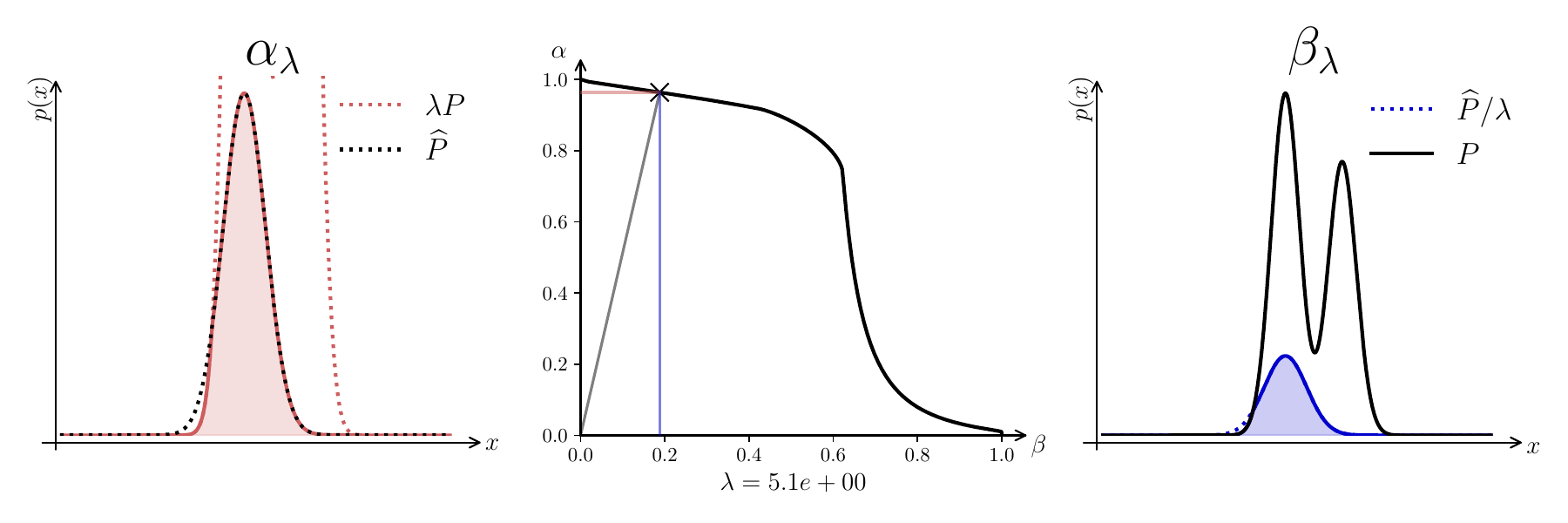}        \label{app:fig:goodprecisionhighl}}
        \caption{PR-Curves for the model in Figure~\ref{app:fig:goodprecision}}
    \end{figure}
\end{minipage}
\end{figure}

\subsection{Precision and Recall in practice}
To perfectly compute the set $\PRd(P, \whP)$, one needs the ratio $p(\vx)/\whp(\vx)$ for all $\vx\in \Xset$. In practice, a variety of heuristics are employed.
\cite{sajjadi_assessing_2018} use $k$-NN based algorithm in the Inception latent space to estimated the densities. \cite{simon_revisiting_2019} use an ensemble of classifiers in Inception's latent space to estimate the likelihood ratio. \cite{verine_precision-recall_2023}  use a neural network based discriminator, simlarly to $f$-GANs, to estimate the likelihood ratio. With these methods, we can compute the PR-Curve for high dimensionnal dataset such as MNIST: see Figure~\ref{app:fig:PRMNIST}.

 \begin{figure}[H]
 \centering
    \subfloat[Model 1: High Recall\\FID: $17.06$, IS: $2.69$]{\hspace{0.03\textwidth}\includegraphics[width=0.15\textwidth]{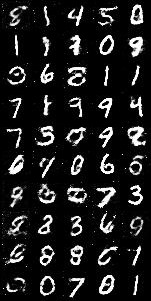}\label{fig:model1}\hspace{0.07\textwidth}}
    \subfloat[Model 2: High Precision\\FID: $8.80$, IS:$2.57$]{\hspace{0.03\textwidth}\includegraphics[width=0.15\textwidth]{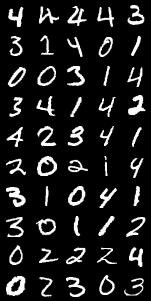}\hspace{0.07\textwidth}}
    \subfloat[PR-Curves for Model 1 and 2.]{\hspace{0.07\textwidth}\includegraphics[width=0.25\textwidth]{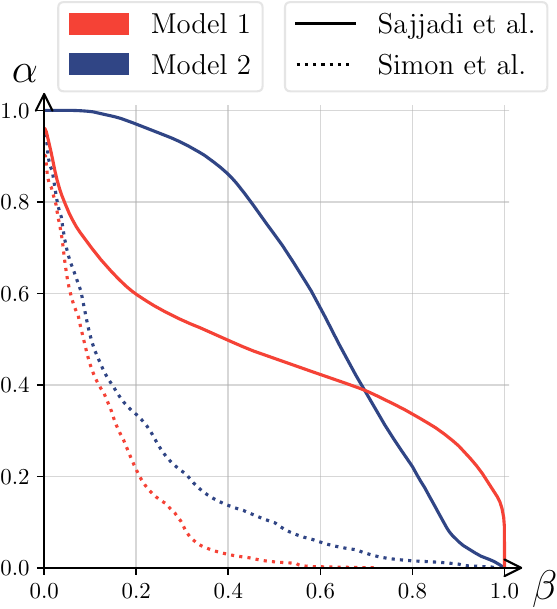}\label{fig:prcruves12} \hspace{0.07\textwidth}}
    \caption{Two different models are displayed with very different performances. Model 1 have a great diversity and display all different digits, but contours, backgrounds and shapes are sometimes incoherent. Model 2 is generating coherent samples from only half the classes. Traditional metrics - FID ($\downarrow$) and IS ($\uparrow$) - are given for comparison. }
    \label{app:fig:PRMNIST}
\end{figure}
\section{Proof and Supplementary for Section~\ref{sec:optirej}}
\subsection{Proof for Theorem~\ref{thm:optrej}}\label{app:sec:proofoptirej}
The goal is to find an acceptance function $a(\vx)$ that first minimizes the \fdiv between the target distribution $P$ and the distribution after the rejection process $\wtP_a$. A budget is added to the problem in order to avoid low acceptance rate. We set the budget to be $K$, the average number of samples to draw before accepting one. With a budget of $K$, the average acceptance rate is  $1/K$. In analogy with the unlimited budget rejection process, the average number of samples to draw in order to keep one is $M=\max_{\setX} p(\vx)/\whp(\vx)$. The function $a$ is the solution of the problem:
\begin{align}
	\begin{split}
		\min_a & \quad\Df(P\Vert\wtP_a ) \\
		\mbox{s.t.} &\quad \begin{cases} \mathbb{P}\left(\mbox{acceptance}\right)\geq1/K \\
		\forall \vx, \, 0\leq a(\vx) \leq 1 \end{cases}
	\end{split}
\end{align}
First, we can consider $\Df(\wtP_a\Vert P )$ instead of $\Df(P\Vert\wtP_a )$ without loss of generality: This is because  $\Df(P\Vert\wtP_a ) = \mathcal{D}_{f'}(\wtP_a\Vert P )$ for $f': x\mapsto xf(1/x)$. Further, the solution to the optimal $a(\vx)$ turns out to be independent of $f$.

Moreover, we can assume that the budget is always lower that the unlimited budget. In other terms, instead of forcing the acceptance rate to be greater to $1/K$ we can force is to be exactly equal to $1/K$. Then, the probability of acceptance being $\mathbb{P}\left(\mbox{acceptance}\right) = \E_{\whP}\left[a(\vx)\right]$,  we can write an equivalent problem as: 

\begin{align}
	\begin{split}
		\min_a & \quad\Df(\wtP_a\Vert P ) \\
		\mbox{s.t.} &\quad \begin{cases} \E_{\whP}\left[a(\vx)\right] = 1/K \\
		\forall \vx, \, 0\leq a(\vx) \leq 1 \end{cases}
	\end{split}
\end{align}

Using the definition of the densities in the rejection sampling context, $\wtp_a(\vx) = K\whp(\vx)a(\vx)$, the problem is equivalent to:
\begin{align}
	\begin{split}
		\min_a & \quad \E_{P}\left[f\left(\frac{K\whp(\vx)a(\vx)}{p(\vx)}\right)\right] \\
		\mbox{s.t.} &\quad \begin{cases} \E_{\whP}\left[a(\vx)\right]= 1/K \\
		\forall \vx, \, 0\leq a(\vx) \leq 1 \end{cases}
	\end{split}
\end{align}

Switching to the discrete case, the problem becomes :
\begin{align}
\begin{split}
\label{eq:app:discreteprob}
	\min_{\va\in \reals^N} & \quad \sum_i^N p_i f\left(a_i \frac{\wh p_iK}{p_i}\right)  \\
    \mbox{s.t.} &\quad  \begin{cases} \sum_i^N \wh p_i a_i = 1/K \\
     \forall i, \, 0\leq a_i \leq 1   \end{cases}
\end{split}
\end{align}

The Lagrangian function associated with the problem  \ref{eq:app:discreteprob} is :
\begin{align}
    \mathcal{L}(\va, \mu ,\vlambda_1, \vlambda_2) = \sum_i^N p_if\left(a_i \frac{\wh p_iK}{p_i }\right) + \mu \left[ \va^T \wh \vp- 1/K\right] + (\va-\mathds{1})^T\vlambda_1 - \va^T\vlambda_2
\end{align}
All constraints are affine and the objective function is a convex function, therefore the optimal vector $\va\s$ satisfies the KKT conditions: 
\begin{equation}
\label{eq:KKTl1l2}
 \begin{cases}
\nabla_{a_i} \mathcal{L}(\va\s, \mu\s \vlambda_1\s, \vlambda_2\s) = K\wh p_i\nabla f\left(a_i\s \frac{\wh p_iK}{p_i }\right) + \mu\s\wh p_i + (\lambda_{1i}\s-\lambda_{2i}\s) = 0,  \quad \forall i \\
\sum_i a\s_i \wh p_i=  1/K \\
\lambda_{1i}\s(a_i\s-1)=0, \quad \forall i \\
\lambda_{2i}\s a_i\s=0, \quad \forall i \\
\lambda_{1i}\s, \lambda_{2i}\s\geq 0, \forall i
\end{cases}
\end{equation}
Using the 1st condition: 
\begin{align}
    a_i\s= \frac{p_i }{\wh p_i K } \left[\nabla f\right]\inv\left(\frac{\lambda_{2i}\s-\lambda_{1i}\s}{K\wh p_i} - \mu/K\right)
\end{align}

Since $\left[ \nabla f\right]\inv = \nabla f\s$:
\begin{align}
\label{eq:aifs}
    a_i\s= \frac{p_i }{\wh p_iK } \left[\nabla f\s
    \right]\left(\frac{\lambda_{2i}\s-\lambda_{1i}\s}{\wh p_iK} - \mu/K\right)
\end{align}
If the Pearson $\chi^2$ is put aside, all the usual $f\s$ are strictly increasing functions. Therefore, according to Eq~\ref{eq:aifs}, all $a_i>0$. Thus all $\lambda_{2i}\s=0$.
The KKT conditions~\ref{eq:KKTl1l2} become :

\begin{equation}
 \begin{cases}
K\wh p_i \nabla f\left(a_i\s \frac{\wh p_iK}{p_i }\right) + \mu\s \wh p_i  +\lambda_{1i}\s = 0,  \quad \forall i \\
\sum_i a\s_i \wh p_i= 1/ K \\
\lambda_{1i}\s (a_i\s-1)=0, \forall i \\
\lambda_{1i}\s\geq 0, \forall i
\end{cases}
\end{equation}
And thus :
\begin{align}
\label{eq:aifs2}
    a_i\s= \frac{p_i }{\wh p_i K } \left[\nabla f\s
    \right]\left(-
    \frac{\lambda_{1i}\s}{\wh p_iK} - \mu/K\right)
\end{align}
To get the full formula for $a_i\s$, we need to compute the $\lambda_{1i}$s. For this purpose, let us use strong duality to reformulate our problem:
\begin{align}
    \min_{\va} \max_{\vlambda \geq \vzero, \vmu} &\sum_i^N p_if\left(a_i  \frac{\wh p_iK}{p_i }\right) + \mu \left[ \va^T \wh \vp- 1/K\right] + (\va-\mathds{1})^T\vlambda_1 \\
    &= \max_{\vlambda \geq \vzero, \vmu}  \min_{\va}\sum_i^N p_if\left(a_i \frac{\wh p_iK}{p_i }\right) + \mu \left[ \va^T \wh \vp- 1/K\right] + (\va-\mathds{1})^T\vlambda_1
\end{align}
Then, we can use the Fenchel Conjugate:
\begin{align}
	\begin{split}
		\min_{\va}\sum_{i}^{N}p\s_{i}f\left(a_i\frac{\wh p_{i}K}{p\s_{i}}\right)+\mu\left[\va^{T}\wh\vp-1/K\right]+(\va-\mathds{1})^{T}\vlambda_{1} & = \begin{multlined}[t]\min_{\va}\sum_{i}^{N}p\s_{i}\left[f\left(a_i\frac{\wh p_{i}K}{p\s_{i}}\right)-a_{i} \left(\frac{-\mu\wh p_{i}-\lambda_{1i}}{p_i}\right)\right]\\
 -\mu /K-\mathds{1}^{T}\vlambda_{1} \end{multlined}\\
		 & =\begin{multlined}[t]-\sup_{\va}\left\{ \sum_{i}^{N}p\s_{i}\left[a_{i}\left(\frac{-\mu\wh p_{i}-\lambda_{1i}}{p_i}\right)-f\left(a_i\frac{\wh p_{i}K}{p\s_{i}}\right)\right]\right\} \\ -\mu /K-\mathds{1}^{T}\vlambda_{1}\end{multlined} \\
	 & =-\sum_{i}^{N}\left[p\s_{i}f\s\left(-\frac{p\s_{i}}{\wh p_{i}K}\frac{\mu\wh p_{i}+\lambda_{1i}}{p_i}\right)\right]-\mu /K-\mathds{1}^{T}\vlambda_{1}\\
 & =-\sum_{i}^{N}\left[p\s_{i}f\s\left(-\mu/K-\frac{\lambda_{1i}}{\wh p_{i}K}\right)\right]-\mu/ K-\mathds{1}^{T}\vlambda_{1}
    \end{split}
\end{align}

Define $u_{i}=\frac{\lambda_{i1}}{\hat{p}_{i}}$, assuming $\hat{p}_{i}>0$
everywhere. Note that the constraints $\lambda_{i1}\ge0$ and $u_{i}\ge0$
are equivalent. The above equation becomes

\begin{align}
\sup_{\lambda_1\ge0}\mathcal{L}\left(\va\s,\mu\s, \vlambda_{1},\vlambda_{2}\s\right) & =\sup_{\vu\ge0}-\sum_{i}^{N}p\s_{i}f\s\left(-\left(\mu\s+u_{i}\right)/K\right)-\sum_{i}^{N}\hat{p}_{i}u_{i}-\mu\s/K
\end{align}

Let us make another change of variable to make a conjugate form appear.
Define $v_{i}=-\left(\mu\s+v_{i}\right)$. So $u_{i}=-\mu\s-v_{i}$
and the constraint $u_{i}\ge0$ becomes $v_{i}\le-\mu\s$.
Also, define $g(t)=f(Kt)$. Then $g\s  (t)=f\s(\frac{t}{K})$.
Above equation becomes 

\begin{align}
	\sup_{\vlambda_1\ge0}\mathcal{L}\left(\va\s,\mu\s, \vlambda_{1},\vlambda_{2}\s\right) & =\sup_{\vv\le-\mu\s}\sum_{i}^{N}\hat{p}_{i}v_{i}-\sum_{i}^{N}p_i g\s  \left(v_{i}\right)-\mu\s\left(K-1\right)
\end{align}

Recall that $\arg\sup_{t}\left\langle a,t\right\rangle -f(t)=\nabla f\s(a)$
and $\arg\sup_{t}\left\langle a,t\right\rangle -f\s(t)=\nabla f(a)$.
Thus, given $\mu\s$ we can compute the optimal values of $v_{i}$
one by one as follows:

\begin{align*}
v_{i}\s & =\arg\sup_{v_{i}\le-\mu\s}\hat{p}_{i}v_{i}-p_i g\s  \left(v_{i}\right)\\
 & =\arg\sup_{v_{i}\le-\mu\s}\frac{\hat{p}_{i}}{p_i}v_{i}-g\s  \left(v_{i}\right)\\
 & =\min\left(-\mu\s,\nabla g\left(\frac{\hat{p}_{i}}{p_i}\right)\right)
\end{align*}

So $u_{i}\s=\max\left(0,-\mu\s-\nabla g\left(\frac{\hat{p}_{i}}{p_i}\right)\right)$. This gives us the optimal values of $\lambda_{i1}\s$.
Note that $\nabla g(t)=K\nabla f\left(Kt\right)$. Replacing $\frac{\lambda_{1i}\s}{\hat{p}_{i}}$ by $u_{i}\s$
in the formula of $a_{i}\s$ gives us:

\begin{align*}
a_{i}\s & =\frac{p_i }{\hat{p}_{i}K}\nabla f\s\left(-\mu\s/K-\max\left(0,-\mu\s-\nabla g\left(\frac{\hat{p}_{i}}{p_i}\right)/K\right)\right)\\
 & =\frac{p_i }{\hat{p}_{i}K}\nabla f\s\left(-\mu\s/K+\min\left(0,\mu\s+\nabla g\left(\frac{\hat{p}_{i}}{p_i}\right)/K\right)\right)\\
 & =\frac{p_i }{\hat{p}_{i}K}\nabla f\s\left(\min\left(-\mu\s,\nabla g\left(\frac{\hat{p}_{i}}{p_i}\right)\right)/K\right)\\
 & =\frac{p_i }{\hat{p}_{i}K}\nabla f\s\left(\min\left(-\mu\s/K,\nabla f\left(\frac{\hat{p}_{i}K}{p_i}\right)\right)\right)
\end{align*}

Note that $\nabla f\s$ is strictly increasing, thus:

\begin{align*}
	a_{i}\s & =\frac{p_i }{\hat{p}_{i}K}\min\left(\nabla f\s\left(-\frac{\mu\s}{K}\right),\frac{\hat{p}_{i}K}{p_i}\right)\\
 & =\min\left(\frac{p_i }{\hat{p}_{i}K}\nabla f\s\left(-K\mu\s\right),1\right)
\end{align*}

Note that $\nabla f\s\left(-\mu\s/K\right)$ is a constant.
So the optimal acceptance function under budget looks like $a(\vx)=\min\left(1,c\frac{p(\vx)}{\hat{p}(\vx)}\right)$
for some constant $c$ defined by K only as:
\begin{align}
	\int_{\setX} \min\left(\whp(\vx),cp(\vx)\right)\dx = 1/K.
\end{align}
To facilitate the understanding of $c$, we can set this constant to be equal to $c/M$ instead. Thus,
\begin{align}
a(\vx) = \min\left(\frac{p(\vx)}{\whp(\vx)}\frac{c}{M}, 1\right)
\end{align}
With that notation, $c\geq1$ and if the optimal unlimited acceptance function is obtained with $c=1$:  
\begin{align}
	a(\vx) = \min\left(\frac{p(\vx)}{\whp(\vx)}\frac{1}{M}, 1\right) = \frac{p(\vx)}{\whp(\vx)M}
\end{align}

\subsection{Algorithm to compute $c_K$} \label{app:subsec:algoc}
In Section~\ref{sec:optirej},  we show that the optimal acceptance function is
\begin{align}
a(\vx, c_K) =\min\left(\frac{p(\vx)}{\whp(\vx)}\frac{c_K}{M}, \, 1\right).
\end{align}
The constant $c_K$ is determined exclusively by the budget $K$. In practice, we can draw a set of samples from $\whP$ and adjust $c_K$ to obtain the correct budget. We use a dichotomy algorithm detailed in Algorithm~\ref{alg:searchc}.
\begin{algorithm}[H]
\caption{Dichotomy to compute $c_K$.}
\label{alg:searchc}
\textbf{Input}: N generated samples $\vx^{\mathrm{fake}}_1, \dots, \vx^{\mathrm{fake}}_N\sim \whP$ \\
\textbf{Parameter}: Budget $K$, Threshold $\epsilon$\\
\textbf{Output}: Constant $c_K$
\begin{algorithmic}[1] 
\State Let $c_{\mathrm{min}}=1e^{-10}$ and $c_{\mathrm{max}}=1e^{10}$.
\State $c_K = (c_{\mathrm{max}}+c_{\mathrm{min}})/2$
\State Define the loss $\cal L(c_K) = \sum_{i=1}^Na\left(\vx_i^{\mathrm{fake}}, c_K\right)-\frac{1}{K}$
\While{ $\left\vert\cal L(c_K)\right\vert\geq \epsilon$}
\If{$\cal L(c_K)>\epsilon$}
\State $c_{\mathrm{max}}=c_K$
\ElsIf{$\cal L(c_K)<-\epsilon$}
\State $c_{\mathrm{min}}=c_K$
\EndIf
\State Update: $c_K = (c_{\mathrm{max}}+c_{\mathrm{min}})/2$
\State Update: $\cal L(c_K)$

\EndWhile

\end{algorithmic}
\end{algorithm}

\subsection{Proof for Theorem~\ref{thm:improvalpha}}\label{app:sec:improvalpha}
First, with $a(\vx) = \min\left(1, \frac{c_k}{M}\frac{p(\vx)}{\whp(\vx)}\}\right)$, let us recall that 
\begin{align}
    \wtp_a(\vx) &= K \whp(\vx)a(\vx) \\
    & = \min \left(K\whp(\vx), \frac{Kc_K}{M}p(\vx)\right).
\end{align}
Thus:
\begin{align}
    \alpha_\lambda(P\Vert \wtP_a) &= \int_{\Xset}\min\left(\lambda p(\vx), \,\wtp(\vx)\right)\dx \\
&=  \int_{\Xset}\min\left(\lambda p(\vx), \, K\whp(\vx),\, \frac{Kc_K}{M}p(\vx) \right)\dx.
\end{align}
Naturally, the precision can be evaluated for $\lambda$ lower or greater than $Kc_K/M$. 
For $\lambda\leq Kc_K/M$:
\begin{align}
    \alpha_\lambda(P\Vert \wtP_a) &=  \int_{\Xset}\min\left( K\whp(\vx),\, \frac{Kc_K}{M}p(\vx) \right)\dx \\
    &=   K\int_{\Xset}\min\left(\frac{c_K}{M}p(\vx),\, \whp(\vx) \right)\dx \\
    &=  K \E_{\whP}\left[\min\left(\frac{c_K}{M}\frac{p(\vx)}{p(\vx)},\, 1 \right) \right]  \\
    &= K \frac{1}{K} \quad \mbox{by definition of} c_K, \\
    &=  K\alpha_{c_K/M}(P\Vert \whP).
\end{align}
Thus, under a given threshold $Kc_K/M$, the precision is constant and equal to $K\alpha_{c_K/M}(P\Vert \whP)$. Moreover, we can give a lower bound on this constant value in terms of $K$. As a matter of fact, $\alpha_\lambda$ is an increasing function of $\lambda$, therefore:
\begin{align}
    \alpha_{c_K/M}(P\Vert \whP) &\geq  \alpha_{1/M}(P\Vert \whP) \\
    &\geq  \int_{\Xset}\min\left(\frac{1}{M}p(\vx), \,\wtp(\vx)\right)\dx.
\end{align}
Finally, by the definition of $M$, for every $\vx\in\Xset$, $\frac{1}{M}p(\vx)\leq \whp(\vx)$. Consequently, 
\begin{align}
  \alpha_\lambda(P\Vert \wtP_a) =   K\alpha_{c_K/M}(P\Vert \whP) \geq \frac{K}{M}.
\end{align}
For $\lambda\leq Kc_K/M$:
\begin{align}
    \alpha_\lambda(P\Vert \wtP_a) &=  \int_{\Xset}\min\left( \lambda p(\vx),\,K\whp(\vx) \right)\dx \\
    &=   K\int_{\Xset}\min\left(\frac{\lambda}{K}p(\vx),\, \whp(\vx) \right)\dx \\
    &=  K\alpha_{\lambda/K}(P\Vert \whP). 
\end{align}
And, since $\lambda/K\geq \lambda/M$:
\begin{align}
    \alpha_\lambda(P\Vert \wtP_a)=  K\alpha_{\lambda/K}(P\Vert \whP)\geq  K\alpha_{\lambda/M}(P\Vert \whP).
\end{align}
Finally, with $\alpha_\lambda = \lambda \beta_\lambda $, 
\begin{align}
    \beta_\lambda(P\Vert \wtP_a) = \frac{K}{\lambda}\alpha_{\lambda/K}(P\Vert \whP) = \frac{K}{(\lambda)}\frac{\lambda}{K}\beta_{\lambda/K}(P\Vert \whP) = \beta_{\lambda/K}(P\Vert \whP), 
\end{align}
And, since $\lambda/K\leq M/c_K$, we have:
\begin{align}
    \beta_\lambda(P\Vert \wtP_a)\geq \beta_{c_K/M}(P\Vert \whP), 
\end{align}
Therefore we have two regimes:
\begin{itemize}
    \item For $\lambda\geq \frac{Kc_K}{M}$:
    \begin{align*}
        \alpha_\lambda \left(P\Vert \wtP_{\aobs}\right) = 1 \et
        \beta_\lambda \left(P\Vert \wtP_{\aobs}\right) = 1/\lambda
    \end{align*}
    \item For $\lambda\leq \frac{Kc_K}{M}$:
    \begin{align*}
    \begin{cases}
    \alpha_\lambda(P\Vert \wtP_{\aobs})= K \alpha_{\lambda/K}(P\Vert \whP) \\
        \beta_\lambda(P\Vert \wtP_{\aobs})=  \beta_{\lambda/K}(P\Vert \whP)
    \end{cases}
    \end{align*}
    This can be seen as a vertical scaling of the PR-Curve. For a given point $(\alpha, \beta)$ in $\PRd(P\Vert \whP)$, then the point with the same $\beta$ in  $\PRd(P\Vert \wtP)$ has a precision $K\alpha$, up to a certain saturating level ($\alpha<1$). 
\end{itemize}\newpage

\subsection{Information Divergence Frontier Improvement}\label{app:sec:djolonga}

In \cite{djolonga_precision-recall_2020}, the authors  define another precision-recall curve, named the Information Divergence Frontiers:

\[
\mathcal{F}_{\beta}^{\cap}\left(P,Q\right)=\left\{ (\pi,\rho)\in\mathcal{R}_{\beta}^{\cap}\left(P,Q\right):\not\exists(\pi',\rho')\in\mathcal{R}^{\cap}\left(P,Q\right)\,s.t.\,\text{\ensuremath{\pi'<\pi,\rho'<\rho}}\right\} 
\]

Where $\mathcal{R}_{\beta}^{\cap}\left(P,Q\right)=\left\{ \left(\mathcal{D}_{\beta}(R,Q),\mathcal{D}_{\beta}(R,P)\right):R\in\mathcal{P}(\mathcal{X})\right\} $
and where $\mathcal{D}_{\beta}$ is the Renyi divergence parametrized
by $\beta$.

As an immediate corrolary of the previous theorem and of proposition
6 of \cite{djolonga_precision-recall_2020}, we can write the following:

\begin{corollary}

Under the same setting as theorem \ref{thm:improvalpha}, for any $(\pi,\rho)\in\mathcal{F}_{\infty}^{\cap}\left(P,\widehat{P}\right)$
we have $(\pi',\rho)\in\mathcal{F}_{\infty}^{\cap}\left(P,\widetilde{P}_{a_{O}}\right)$
with $\pi'=\max\left(0,\pi-\log K\right)$. 

\end{corollary}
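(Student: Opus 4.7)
The plan is to combine the vertical rescaling of the PR-Curve established in Theorem~\ref{thm:improvalpha} with the identification between the $\beta=\infty$ Information Divergence Frontier and the PR-Curve provided by Proposition~6 of \cite{djolonga_precision-recall_2020}. Since the authors advertise this as an immediate corollary, the whole argument should reduce to a change of coordinates.

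First I would carefully state Proposition~6 of \cite{djolonga_precision-recall_2020}: for any two distributions $P,Q$, a pair $(\pi,\rho)$ lies on $\mathcal{F}_\infty^\cap(P,Q)$ if and only if it is the image of some point $(\alpha,\beta)\in\PRd(P,Q)$ under the logarithmic reparameterization $\pi=-\log\alpha$, $\rho=-\log\beta$. Intuitively, $\mathcal{D}_\infty$ is the essential supremum of the log-density ratio, which is exactly what the slope parameter $\lambda$ of the PR-Curve encodes, so the two frontiers are literally two charts of the same one-parameter family.

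Second, I would read off the rescaling lemma inside the proof of Theorem~\ref{thm:improvalpha}: the frontier $\PRd(P,\wtP_{\aobs})$ is obtained from $\PRd(P,\whP)$ by multiplying the precision coordinate by $K$ (clipped at $1$) while leaving the recall unchanged. Concretely, each $(\alpha,\beta)\in\PRd(P,\whP)$ corresponds to $(\min(K\alpha,1),\beta)\in\PRd(P,\wtP_{\aobs})$.

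Transporting this rescaling through the map $\alpha\mapsto-\log\alpha$ yields the claimed relation: $\rho$ is preserved, and the new precision coordinate reads
\[
\pi' = -\log\min(K\alpha,1) = \max\bigl(0,\,-\log K-\log\alpha\bigr) = \max(0,\pi-\log K).
\]
The only real obstacle is making sure the identification from Proposition~6 of \cite{djolonga_precision-recall_2020} is applied on both branches of the PR-Curve, including the saturating regime $\lambda\ge Kc_K/M$ where the precision is clipped to $1$; this clipping is exactly what produces the outer $\max(0,\cdot)$ and prevents negative Rényi divergences. Once this bookkeeping is done, the corollary follows directly from the two cited statements with no additional computation.
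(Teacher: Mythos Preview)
Your proposal is correct and follows exactly the approach the paper indicates: the paper presents this result as an immediate consequence of Theorem~\ref{thm:improvalpha} together with Proposition~6 of \cite{djolonga_precision-recall_2020}, and you have simply spelled out that deduction---namely, transporting the multiplicative rescaling $(\alpha,\beta)\mapsto(\min(K\alpha,1),\beta)$ of $\PRd$ through the logarithmic change of coordinates $\pi=-\log\alpha$, $\rho=-\log\beta$ to obtain $\pi'=\max(0,\pi-\log K)$. There is nothing to add.
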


\section{Bounds}\label{app:sec:bounds}
\begin{theorem}
Let $M=\sup_{x\in\mathcal{X}}\frac{p(x)}{\whp(\vx)}$. For any $f$-divergence,
we have 
\[
\Df(P\Vert\wtP_a)\le\Df(P\Vert\whP)-\min\left(1,\frac{K-1}{M}\right)\Df(P\Vert\whP)
\]

and for Kullback-Leibler we have for $\beta=\frac{\log K}{\log M}$

\[
\KL(P\Vert\wtP)\le(1-\beta)\left(\KL(P\Vert\whP)-\D^\mathrm{R}_{\beta}(P\Vert\whP)\right)
\]
where $\D^\mathrm{R}_{\beta}$ is the Renyi divergence with parameter $\beta$ 
\end{theorem}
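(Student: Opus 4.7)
Both inequalities exploit the explicit form of the optimal acceptance rule derived in Theorem~\ref{thm:optrej}, namely $a(\vx) = \min\bigl(1,\, c_K p(\vx)/(M\whp(\vx))\bigr)$, which gives $\wtp_a(\vx) = \min\bigl(K\whp(\vx),\, (Kc_K/M)\,p(\vx)\bigr)$ with $c_K \ge 1$.

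For the first (general $f$-divergence) inequality, I would use a convex decomposition together with joint convexity of $\Df$. The key pointwise estimate is $\wtp_a \ge \tfrac{K-1}{M}\, p$: on the branch $\wtp_a = K\whp$ this follows from $p/\whp \le M \le KM/(K-1)$, and on the branch $\wtp_a = K c_K p/M$ it follows from $K c_K \ge K \ge K-1$. Setting $\lambda = \min(1,(K-1)/M)$ (the case $\lambda = 1$ being trivial because the unbudgeted sampler already recovers $P$ exactly), the function $Q = (\wtp_a - \lambda p)/(1-\lambda)$ is a valid probability density, so $\wtP_a = \lambda P + (1-\lambda) Q$. Joint convexity then yields
\[
\Df(P\|\wtP_a) \le \lambda\,\Df(P\|P) + (1-\lambda)\,\Df(P\|Q) = (1-\lambda)\,\Df(P\|Q).
\]
To conclude, I would show $\Df(P\|Q) \le \Df(P\|\whP)$ by exhibiting a $P$-preserving Markov kernel that maps $\whP$ to $Q$ and invoking the data processing inequality; the explicit piecewise form of $Q$ on the partition $\{u \ge M/c_K\}$ vs.\ $\{u < M/c_K\}$ (with $u = p/\whp$) points to a natural kernel of this form.

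For the KL bound, I would start from the exact identity $\KL(P\|\wtP_a) = \KL(P\|\whP) - \log K - \E_P[\log a]$, which follows immediately from $\wtp_a = K\whp\,a$. Applying the elementary inequality $\min(1,x) \le x^\beta$ (valid for $x \ge 0$ and $\beta \in [0,1]$) at $x = c_K u/M$ produces simultaneously
\[
\E_P[\log a] \le \beta\bigl(\log c_K - \log M + \KL(P\|\whP)\bigr), \qquad \tfrac{1}{K} = \E_\whP[a] \le (c_K/M)^\beta\,\E_\whP[u^\beta].
\]
Recognising $\E_\whP[u^\beta] = \int p^\beta \whp^{1-\beta}\,\d\vx = \exp\bigl((\beta-1)\, \D^\mathrm{R}_\beta(P\|\whP)\bigr)$ and choosing $\beta = \log K/\log M$ so that $M^\beta = K$, the second relation controls $\log c_K$ in terms of $\D^\mathrm{R}_\beta(P\|\whP)$. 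Substituting this control into the first estimate and back into the KL identity, then exploiting the cancellation $\beta\log M = \log K$, delivers the Renyi bound.

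The delicate step is the post-mixture domination $\Df(P\|Q) \le \Df(P\|\whP)$ in the first part; constructing the required $P$-preserving channel from the piecewise form of $Q$ is where the real work lies. An alternative route is to decompose $\whP$ itself as a mixture containing $Q$ and use joint convexity in the other direction. For the KL bound, the arithmetic is routine once the $\min(1,x) \le x^\beta$ inequality and the budget constraint are aligned with $\beta = \log K/\log M$; the only care required is tracking the signs in the elimination of $\log c_K$.
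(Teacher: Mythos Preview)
Your plan has a structural gap in the first part and a sign error in the second; both stem from working with the explicit $\wtp_a$ rather than exploiting its optimality.

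\textbf{$f$-divergence bound.} Your decomposition $\wtP_a=\lambda P+(1-\lambda)Q$ with $\lambda=\min(1,(K-1)/M)$ is valid and convexity does give $\Df(P\Vert\wtP_a)\le(1-\lambda)\Df(P\Vert Q)$. But the remaining step $\Df(P\Vert Q)\le\Df(P\Vert\whP)$ is where the argument actually lives, and neither of your proposed routes closes it. Constructing a $P$-preserving kernel sending $\whP$ to $Q$ is a sufficiency-type statement that is far from automatic for the piecewise $Q$ you obtain; and your ``alternative route'' of writing $\whP$ as a mixture containing $Q$ goes the wrong way, since convexity then yields a \emph{lower} bound on $\Df(P\Vert\whP)$, not an upper bound on $\Df(P\Vert Q)$. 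The paper sidesteps this entirely: rather than decomposing $\wtP_a$, it uses the fact that $\wtP_a$ is the $\Df$-minimiser among all densities $q\le K\whp$. Taking the comparison density $p_\alpha=(1-\alpha)\whp+\alpha p$ with $\alpha=\min(1,(K-1)/M)$, one checks $p_\alpha\le K\whp$ directly, so $\Df(P\Vert\wtP_a)\le\Df(P\Vert P_\alpha)$, and convexity of $\Df(P\Vert\cdot)$ immediately gives $\Df(P\Vert P_\alpha)\le(1-\alpha)\Df(P\Vert\whP)$. No residual comparison is needed.

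\textbf{KL bound.} Your identity $\KL(P\Vert\wtP_a)=\KL(P\Vert\whP)-\log K-\E_P[\log a]$ is correct, but to \emph{upper}-bound the left side you need a \emph{lower} bound on $\E_P[\log a]$. The inequality $\min(1,x)\le x^\beta$ gives $\log a\le\beta\log\bigl(c_Ku/M\bigr)$, i.e.\ an \emph{upper} bound on $\E_P[\log a]$; plugging this in produces a lower bound on $\KL(P\Vert\wtP_a)$, the wrong direction. The companion budget inequality $1/K\le(c_K/M)^\beta\E_{\whP}[u^\beta]$ yields a \emph{lower} bound on $\log c_K$, which cannot be combined with an upper bound on $\E_P[\log a]$ to recover the claim. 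The paper again uses optimality: the geometric interpolant $p_\beta=\tfrac{1}{Z}\,\whp^{1-\beta}p^{\beta}$ with $\beta=\log K/\log M$ satisfies $p_\beta\le K\whp$ (a direct pointwise check using $p/\whp\le M$ and $Z=e^{(\beta-1)\D^{\mathrm R}_\beta(P\Vert\whP)}$), hence $\KL(P\Vert\wtP_a)\le\KL(P\Vert P_\beta)$, and an exact calculation gives $\KL(P\Vert P_\beta)=(1-\beta)\bigl(\KL(P\Vert\whP)-\D^{\mathrm R}_\beta(P\Vert\whP)\bigr)$.

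In short, the missing idea in both parts is the same: do not analyse $\wtP_a$ directly, but invoke its optimality against a well-chosen feasible comparison density (arithmetic interpolant for general $f$, geometric interpolant for KL).
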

\begin{proof}
For both bounding the $f$-divergence and the KL divergence, the strategy
will be the same. We want to show that 
\[
\Df(P\Vert\whP)-\Df(P\Vert\wtP)\ge\text{some lower bound}
\]

Note that for any density $p_{\alpha}$ such that such that $p_{\alpha}\le K\whp$,
we have $\Df(P\Vert\wtP)\le\Df(P,P_{\alpha})$ so

\[
\Df(P\Vert\whP)-\Df(P\Vert\wtP)\ge\Df(P\Vert\whP)-\Df(P,P_{\alpha})
\]

So once we have a suitable $p_{\alpha}$, we need to show the lower
bound holds:

\[
\Df(P\Vert\whP)-\Df(P,P_{\alpha})\ge\text{some lower bound}
\]

For bounding general $f$-divergences, we will choose $p_{\alpha}=\whp+\alpha\left(p-\whp\right)$
with $\alpha=\min\left(1,(K-1)\inf_{x\in\mathcal{X}}\frac{\whp(\vx)}{p(\vx)}\right)$

Let us first show that $p_{\alpha}\le K\whp$. 

\begin{align*}
p_{\alpha} & \le\whp+(K-1)\inf_{x}\frac{\whp(\vx)}{p(\vx)}\left(p-\whp\right)
\end{align*}

Note that for any $\vx'\in\mathcal{X}$, 
\begin{align*}
\inf_{x}\frac{\whp(\vx)}{p(\vx)}\left(p(\vx')-\whp(\vx')\right) & \le\whp(\vx')
\end{align*}

So 
\begin{align*}
p_{\alpha}(\vx) & \le\whp+(K-1)\whp\le K\whp
\end{align*}

Next, let us show the lower bound. Recall that $\Df\left(p,\cdot\right)$
is convex in its second argument. Thus, convexity implies:

$\Df(P\Vert P_{\alpha})\le\left(1-\alpha\right)\Df(P\Vert\whP)+\alpha\Df(P\Vert P)\le\left(1-\alpha\right)\Df(P\Vert\whP)$

Now to apply the same type of idea to bound the KL, let us define
$p_{\beta}(\vx)=\frac{1}{Z}\whp(\vx)^{1-\beta}p(\vx)^{\beta}$, where $Z=\int\whp(\vx)^{1-\beta}p(\vx)^{\beta}d\mu(\vx)=e^{(\beta-1)\D^\mathrm{R}_{\beta}(P\Vert\whP)}$
where $\D^\mathrm{R}_{\beta}(P\Vert\whP)=\frac{1}{\beta-1}\log\int p^{\beta}\whp^{1-\beta}d\mu$
is the Renyi divergence of parameter $\beta$ and $\mu$ is the reference
measure.

First, let us choose $\beta'=\frac{\log K-(1-\beta')R_{\beta'}(P\Vert\whP)}{\log M}$
and let us show as before that $p_{\beta'}\le K\whp$. More precisely,
let us show that $\log\frac{p_{\beta'}(\vx)}{K\whp(\vx)}\le0$

For any $x$, we have

\begin{align*}
\log\frac{p_{\beta'}(\vx)}{K\whp(\vx)} & =(1-\beta')\log\whp(\vx)+\beta'\log p(\vx)-\log Z-\log K\whp(\vx)\\
 & =\log\whp(\vx)+\beta'\log\frac{p(\vx)}{\whp(\vx)}-(\beta'-1)R_{\beta'}(P\Vert\whP)-\log K\whp(\vx)\\
 & =\beta'\log\frac{p(\vx)}{\whp(\vx)}-(\beta'-1)R_{\beta'}(P\Vert\whP)-\log K\\
 & \le\frac{\log K-(1-\beta')R_{\beta'}(P\Vert\whP)}{\log M}\log\frac{p(\vx)}{\whp(\vx)}-(\beta'-1)\D^\mathrm{R}_{\beta}(P\Vert\whP)-\log K\\
 & \le\log K-(1-\beta')R_{\beta'}(P\Vert\whP)-(\beta'-1)R_{\beta'}(P\Vert\whP)-\log K\\
 & \le0
\end{align*}

More generally it is easy to see that for all $\beta\in[0,\beta']$, we have $p_{\beta}\le K\whp$. For convenience, we will choose $\beta=\frac{\log K}{\log M}$. Clearly, $\beta\le\beta'$ so $p_{\beta}\le K\whp$.
Finally, let us compute $\KL(P\Vert P_{\beta})$

\begin{align*}
\KL(P\Vert P_{\beta}) & =\int p(\vx)\log\frac{p(\vx).Z}{\whp(\vx)^{1-\beta}p(\vx)^{\beta}}d\mu(\vx)\\
 & =\int p(\vx)\log\left(\frac{p(\vx)^{1-\beta}}{\whp(\vx)^{1-\beta}}.Z\right)d\mu(\vx)\\
 & =(1-\beta)\int p(\vx)\log\left(\frac{p(\vx)}{\whp(\vx)}\right)d\mu(\vx)+\log Z\\
 & =(1-\beta)\KL(P\Vert\whP)-(1-\beta)\D^\mathrm{R}_{\beta}(P\Vert\whP)\\
 & =(1-\beta)\left(\KL(P\Vert\whP)-\D^\mathrm{R}_{\beta}(P\Vert\whP)\right)
\end{align*}

Thus the result holds: $\KL(P\Vert\wtP)\le \KL(P\Vert P_{\beta})\le(1-\beta)\left(\KL(P\Vert\whP)-\D^\mathrm{R}_{\beta}(P\Vert\whP)\right)$
\end{proof}

\section{Additional Experiments}
In this section, we provide more details on the different experiments. First, in Section~\ref{app:sec:MNIST}, we explain how the loss landscape is produced. Then, in Section~\ref{app:subsec:xp2D}, we provide more details on how the budget affects the results on the 25 Gaussians experiments. Finally, in Section~\ref{app:subsec:complexity}, we compare the traditional GAN training procedure and our approach in terms of time complexity.




\subsection{Smoothing the lanscape parameters for MNIST}\label{app:sec:MNIST}
Similarly to \cite{li_visualizing_2018}, the goal is to observe a two dimensional projection of the parameters domain of a neural network, and compute the loss on this domain.

To do so, we train a simple GAN on MNIST. Both the generator and the discriminator are based on 3 linear layers with Leaky Relu. The models are trained using the tradition approach described in Algorithm~\ref{alg:naiveapproach}. Let us define $\theta_0$, the parameter vector of the generator $G_{\theta_0}$. We randomly draw two directions $\theta_1$ and $\theta_2$ in the parameter domain: defining an hyperspace of generators defined as $G_{\theta_0+x\theta_1+y\theta_2}$ with $(x, y)\in\reals^2$.  Then, given any parameters $\theta$, we train a new discriminator $T_2$ based on samples of $P$ and $\whP_{G_{\theta}}$ to determine the baseline loss landscape ($K=1$). For the OBRS loss landscape, we fine-tune the initial model $T$ in order to perform optimal budgeted rejection sampling. Finally, similar to the baseline, a new discriminator $T_2$ is trained to estimate the loss, but based on samples $P$ and $\wtP$. 

In Figure~\ref{app:fig:MNIST}, we plot the loss surface. In addition to Figure~\ref{fig:MNISTSmooth}, we represent a batch of samples drawn from $G_{\theta_0}$ (lower left) and from the $G_\theta$ given the worst loss (upper right). When OBRS is applied, we show in red the rejected samples and in green the accepted samples.  
\begin{figure}[H]
    \centering
    \includegraphics[width=0.9\textwidth]{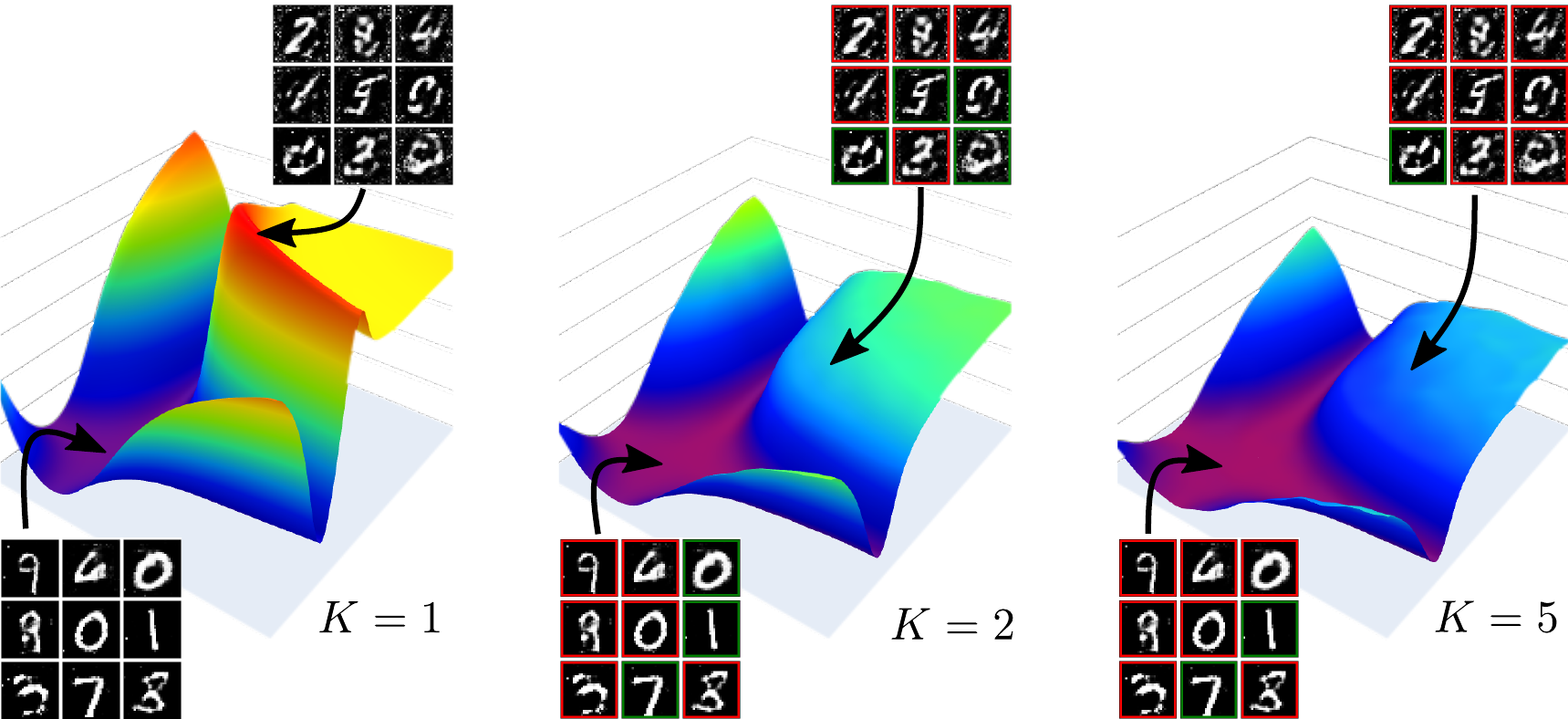}
    \caption{The Loss surface in the parameters domain of a DCGAN trained on MNIST randomly projected in 2D, observed for different rejection sampling budgets. }\label{app:fig:MNIST}
\end{figure}
 \subsection{Additional Results on the 2D 25-Gaussians Dataset}  \label{app:subsec:xp2D}
In this section, we provide more details on the GAN trained on the 25 Gaussians. The goal of this experiment is to compare OBRS with other rejection sampling methods such as DRS \citep{azadi_discriminator_2019} or MH-GAN \citep{turner_metropolis-hastings_2019}, but also with other sampling techniques that involve gradient descent, such as DOT \citep{tanaka_discriminator_2019} and DG$f$low \citep{ansari_refining_2021}. We train a simple GAN on 25 two-dimensional Gaussians and apply each method. We tune (when possible) the method to obtain around $6500$ inferences from the generator to generate $2500$ samples. To be more precise, both ORBS and DRS are easily tunable; however, the rejection rate of MH-GAN highly depends on the number of iterations of the algorithm. Therefore, we set the number of iterations to $2$ to obtain $40\%$ and then tune $\gamma$ and $K$ to achieve a similar acceptance rate.  We obtain the results plotted in Figure~\ref{app:fig:2Dviz}.
\begin{figure}[H]
     \centering
     \includegraphics[width=0.95\linewidth]{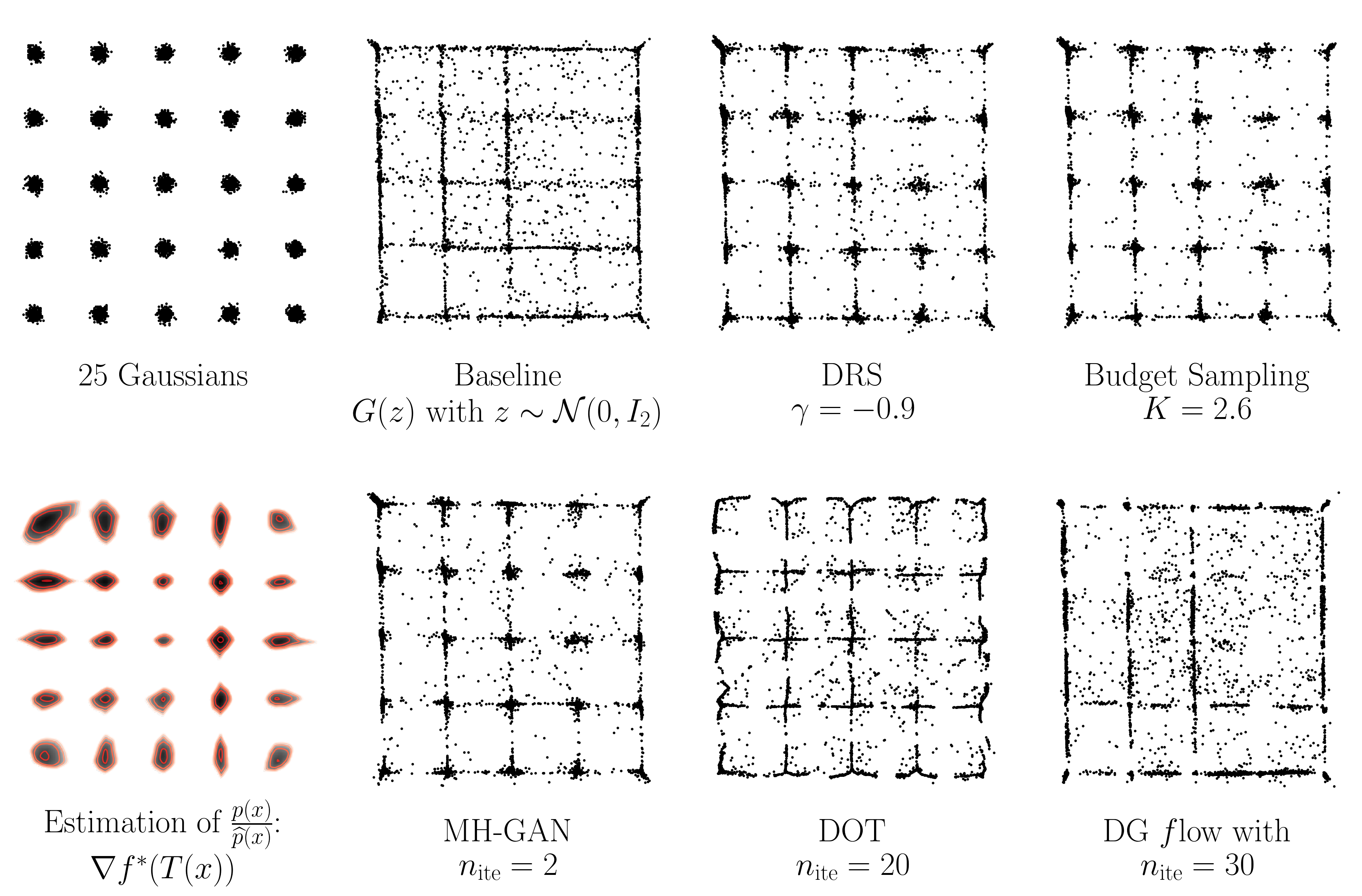}
     \caption{Visual Representation of the different sampling methods.}
     \label{app:fig:2Dviz}
 \end{figure}

In the previous experiment; we arbitrarily set up the acceptance rate (or the sampling time for the \emph{non} rejection sampling methods). We also compare the methods for different sampling time. Since for most methods, the recall was equivalent, we compare the precision denoted as \emph{high quality samples} in \cite{dumoulin_adversarially_2017}. We observe that for any given precision under $93\%$, the fastest method is the OBRS. However, OBRS,  like MH-GAN and DRS, appear to be capped.   
\begin{figure}[H]
    \centering
    \includegraphics[width=\linewidth]{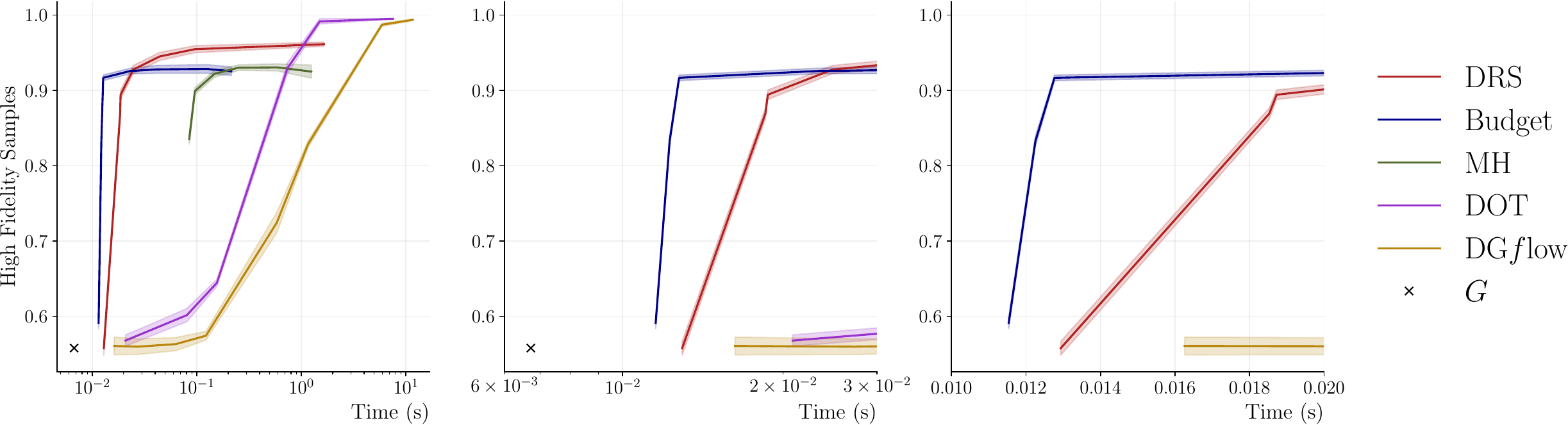}
    \caption{How the different methods behave with regard to time: to achieve similar results, DOT and DGflow need 100 more times. MH only 10 times more. And for similar time (similar budget), Budgeted Reject is better than DRS. (blue above red). MH, DRS and OBRS (Budget) are caped. They only use the discriminator to refine samples, while the DOT and DGflow sample data point from the latent space and refines the samples directly using Gradient ascent.}
    \label{fig:enter-label}
\end{figure}

As the distribution $\whP$ highly impacts how the rejection sampling methods behave, we also compare the OBRS and the DRS methods for different budgets and different $\whP$.  In Figure~\ref{app:fig:obrsvsdrs}, we observe that the precision of the OBRS is systematically better than the DRS. The distribution and the budget set in the experiment illustrated in Figures~\ref{app:fig:2Dviz} are set compare the methods for similar acceptance rates. 
\begin{figure}[H]
    \centering
    \includegraphics[width=0.8\linewidth]{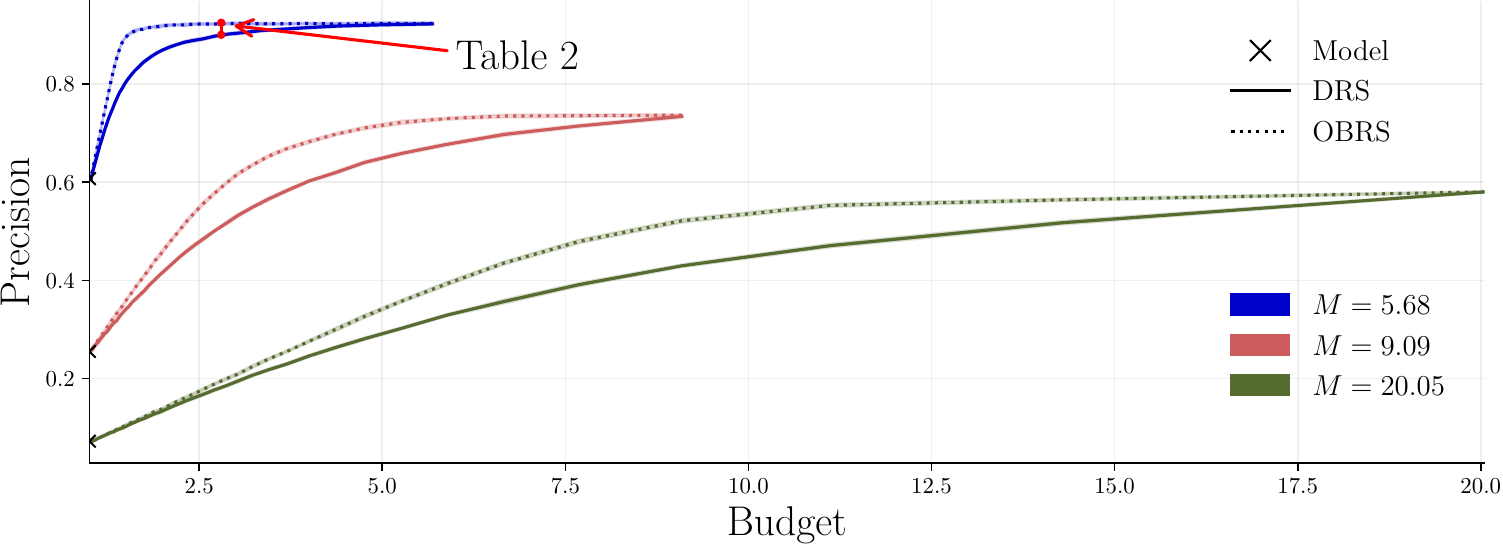}
    \caption{Precision for different budget in various 2D datasets.} 
    \label{app:fig:obrsvsdrs}
\end{figure}

\subsection{Complexity of Algorithm \ref{alg:TOBRS}}\label{app:subsec:complexity}
In Algoritm~\ref{alg:TOBRS}, between every update of $T$ and $G$, the parameter $c_K$ is updated. In practice, the parameter $c_K$ is not update every iterations. In this section, we investigate our the frequency of update affect the training procedure both in convergence speed and in terms of time.

\begin{figure}[H]
    \centering
    \includegraphics[width=\textwidth]{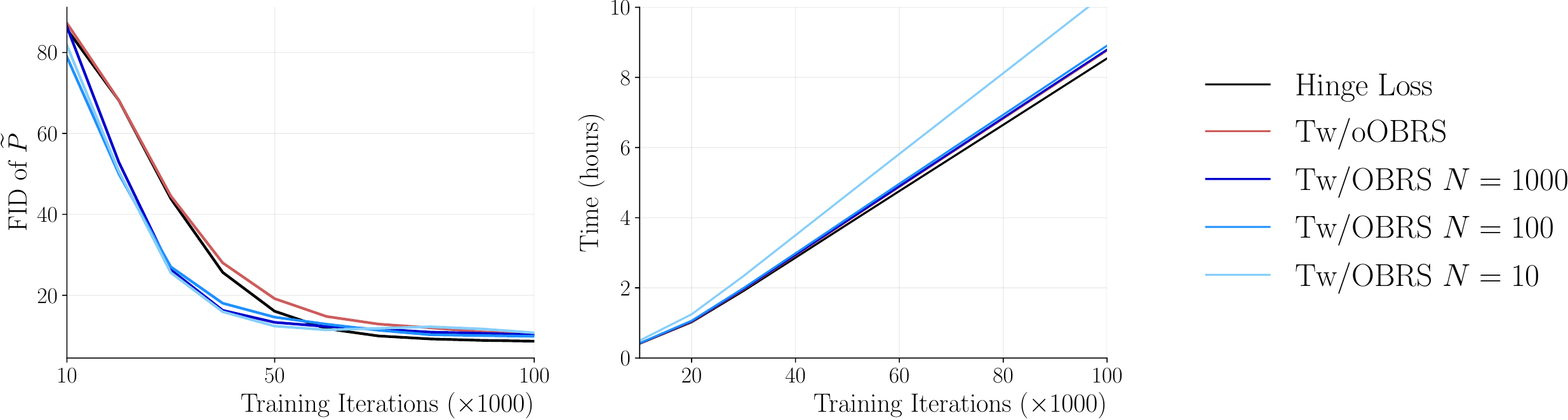}
    \caption{Tw/OBRS: Training BigGAN models on CIFAR-10 with the hinge loss, and $\GAN$ without OBRS (Tw/oOBRS) and with OBRS (Tw/OBRS). For the models trained with OBRS, the parameter $c_K$ is updated every $N$ iterations. }
    \label{app:fig:freqc}
\end{figure}

We train different BigGAN models on CIFAR-10 with different frequency of updates: every $10$ iterations, every $100$ iterations and every $1000$ iterations. We plot in Figure~\ref{app:fig:freqc}, the FID during training and the time during training, both as a function of the number of iterations. We observe that the frequency of updates does not affect the speed of convergence. Furthermore, we observe that update $c_K$ every $10$ operation takes on average $19\%$ longer to train than $\GAN$ without OBRS, while updating every $100$ and $1000$ iterations are only $1.69\%$ and $0.03\%$ longer.

\section{Experimental Procedure}
In this paper, OBRS have been investigated in two different contexts.
\begin{itemize}
    \item Using OBRS to improve a pre-trained model, with different budget.
    \item Training and fine-tuning a model accounting for OBRS with a budget of $K=5$.
\end{itemize}

In every experiment, we have used BigGAN models \cite{brock_large_2019}. For every dataset we have used hyperparameters as close as possible to the original ones. In the original paper, the hinge loss is used and, according to \cite{azadi_discriminator_2019}, the fact that the loss is saturating decreases the performance of the estimation of the density ratio. In the first context, we take a pretrained model, typically trained with hinge loss, and fine-tune the discriminator only, based on $\GAN$. And thus we can perform density estimation for the rejection sampling procedure. In the DRS method, they retrain the discriminator on 10k samples. We opt for training the discriminator on the entire data set with a learning rate of $10^{-10}$ with the same hyperparameters as the one proposed by \cite{brock_large_2019}.

Then for the second context: Tw/OBRS, we need to compare the speed of convergence for three different losses: hinge loss (since it is the original one), $\GAN(P\Vert \whP)$ (Tw/oOBRS) and $\GAN(P\Vert \wtP)$ (Tw/OBRS). However, we evaluate the models based their rejected distribution in terms of FID to analyze the speed of convergence.  Therefore,  two tails for the discriminator were built: one was trained with any loss and the other systematically with $\GAN$. Therefore, we can train the model $G$ based on the given loss and still evaluate the model with OBRS. For the training with OBRS we used this set of hyperparameters. Every model has been trained on a 4xV100 clusters.

\begin{table}[H]
    \centering
    \begin{tabular}{|c|c|ccccc|}
    \hline
       Dataset  & Task  & Tch & Gch & lr T & lr G & Batch Size  \\\hline
        CIFAR-10 & Training & $64$ & $64$ & $2.10^{-5}$ & $2.10^{-5}$ & $50$   \\ 
        CelebA64 & Training & $32$ & $32$ & $1.10^{-4}$ & $4.10^{-4}$ & $128$   \\ 
        CelebA64 & Fine Tuning & $32$ & $32$ & $1.10^{-6}$ & $1.10^{-6}$ & $128$   \\ 
       ImageNet128 & Fine Tuning & $96$ & $96$ & $1.    10^{-5}$ & $1.10^{-5}$ & $2048$   \\ 
\hline
    \end{tabular}
    \caption{Hyper-parameters used for the different BigGAN configurations. Tch and Gch stands for the number of channels in each model. T lr and G lr stands for the learning rate of each models.}
    \label{tab:my_label}
\end{table}

\end{document}